\documentclass[accepted]{uai2026}

\usepackage[american]{babel}

\usepackage{natbib} 
\usepackage{mathtools} 
\usepackage{booktabs} 
\usepackage{tikz} 

\usepackage{mathtools}
\usepackage{amsthm, amssymb}
\usepackage{amsmath} 
\usepackage{tikz-cd}
\usepackage{bm}  
\usepackage{graphics} 
\usepackage{wrapfig}
\usepackage{subcaption}
\usepackage{color}
\usepackage{booktabs, multirow, float, threeparttable}

\usepackage{stfloats} 

\usepackage{array}      
\usepackage{adjustbox}  

\usepackage{thmtools,thm-restate}
\theoremstyle{thmstyleone}

\theoremstyle{thmstyletwo}

\theoremstyle{thmstylethree}

\usepackage{algpseudocode}
\usepackage{algorithm}

\usepackage{subcaption}  
\usepackage{float}       

\graphicspath{{Figures/}}

\newcommand{\trsp}{{\scriptscriptstyle{\mathsf{T}}}}

\newcommand{\euclideanspace}{\mathbb{R}}
\newcommand{\manifold}{\mathcal{M}}
\newcommand{\tangentspace}[1]{\mathcal{T}_{#1}\mathcal{M}}
\newcommand{\simplex}[1]{\Delta^{#1}}
\newcommand{\sphere}[1]{\mathbb{S}^{#1}}
\newcommand{\tangentsimplex}[2]{\mathcal{T}_{#2}\simplex{#1}}
\newcommand{\expmapblank}[1]{\text{Exp}_{#1}}  
\newcommand{\logmapblank}[1]{\text{Log}_{#1}}  
\newcommand{\expmap}[2]{\expmapblank{#1}(#2)}  
\newcommand{\grad}{\mathrm{grad}}

\definecolor{gaboyellow}{HTML}{DDCC77}
\definecolor{gabored}{HTML}{CC6677}
\definecolor{bopurple}{HTML}{332288}
\definecolor{boblue}{HTML}{88CCEE}
\definecolor{dodgerblue}{rgb}{0.12, 0.565, 1.0}
\definecolor{middlegreen}{rgb}{0.7, 0.9, 0.4}
\definecolor{gaboyellowdark}{rgb}{0.66, 0.6, 0.26}

\DeclareRobustCommand{\blackdottedline}{\raisebox{2pt}{\tikz{\draw[black,dotted,line width = 1.1pt](0,0) -- (3mm,0);}}}
\DeclareRobustCommand{\aogaboline}{\raisebox{2pt}{\tikz{\draw[gaboyellow,solid,line width = 1.1pt](0,0) -- (3mm,0);}}}
\DeclareRobustCommand{\amogaboline}{\raisebox{2pt}{\tikz{\draw[gabored,solid,line width = 1.1pt](0,0) -- (3mm,0);}}}
\DeclareRobustCommand{\spheuboline}{\raisebox{2pt}{\tikz{\draw[bopurple,solid,line width = 1.1pt](0,0) -- (3mm,0);}}}
\DeclareRobustCommand{\borisline}{\raisebox{2pt}{\tikz{\draw[boblue,solid,line width = 1.1pt](0,0) -- (3mm,0);}}}

\DeclareRobustCommand{\bluecircle}{\tikz{ \filldraw[color=black, fill=dodgerblue, thick](0,0) circle (.08);}}
\DeclareRobustCommand{\greencircle}{\tikz{ \filldraw[color=black, fill=middlegreen, thick](0,0) circle (.08);}}

\title{Information Theoretic Bayesian Optimization over the Probability Simplex}

\author[1, 2]{Federico Pavesi}
\author[2]{Antonio Candelieri}
\author[1]{Noémie Jaquier}
\affil[1]{%
    Department of Robotics, Perception and Learning\\
    KTH Royal Institute of Technology\\
    Stockholm, Sweden
}
\affil[2]{%
    Department of Economics, Management and Statistics\\
    University of Milan, Bicocca\\
    Milan, Italy
}

\begin{document}
\maketitle

\begin{abstract}
  Bayesian optimization is a data-efficient technique that has been shown to be extremely powerful to optimize expensive, black-box, and possibly noisy objective functions. 
  Many applications involve optimizing probabilities and mixtures which naturally belong to the probability simplex, a constrained non-Euclidean domain defined by non-negative entries summing to one. 
  This paper introduces $\alpha$-GaBO, a novel family of Bayesian optimization algorithms over the probability simplex. Our approach is grounded in information geometry, a branch of Riemannian geometry which endows the simplex with a Riemannian metric and a class of connections. Based on information geometry theory, we construct Matérn kernels that reflect the geometry of the probability simplex, as well as a one-parameter family of geometric optimizers for the acquisition function. We validate our method on benchmark functions and on a variety of real-world applications including mixtures of components, mixtures of classifiers, and a robotic control task, showing its increased performance compared to constrained Euclidean approaches.
\end{abstract}

\section{Introduction}\label{sec:intro}
Bayesian optimization (BO)~\citep{jones1998:BO, mockus2005:BO, garnett_bayesopt} is a prominent approach rooted in Bayesian probabilistic numerics~\citep{hennig2015probabilistic} for optimizing complex, expensive-to-evaluate black-box functions. 
The main idea of BO is to construct a probabilistic surrogate model of the objective function and to guide the search for the optimum from a decision-theoretic perspective based on the information obtained from point evaluations. Besides its universal approximation properties and convergence guarantees, BO has demonstrated remarkable practical success across a wide range of applications, including hyperparameter tuning in machine learning~\citep{wu2019:hyperparameterBO}, robotics~\citep{jaquier20a, jaquier22a}, experimental design~\citep{greenhill2020:experimentaldesignBO}, and materials and chemical discovery~\citep{zuo2021:materialBO}, with extensive empirical evidence reported for both continuous Euclidean search spaces~\citep{shahriari2015:surveyBO} and discrete domains~\citep{deshwal2021:combinatorialBO,gonzalez2024:surveydiscreteBO}. 

\begin{figure}[t]
  \centering
  \includegraphics[width=.80\columnwidth]{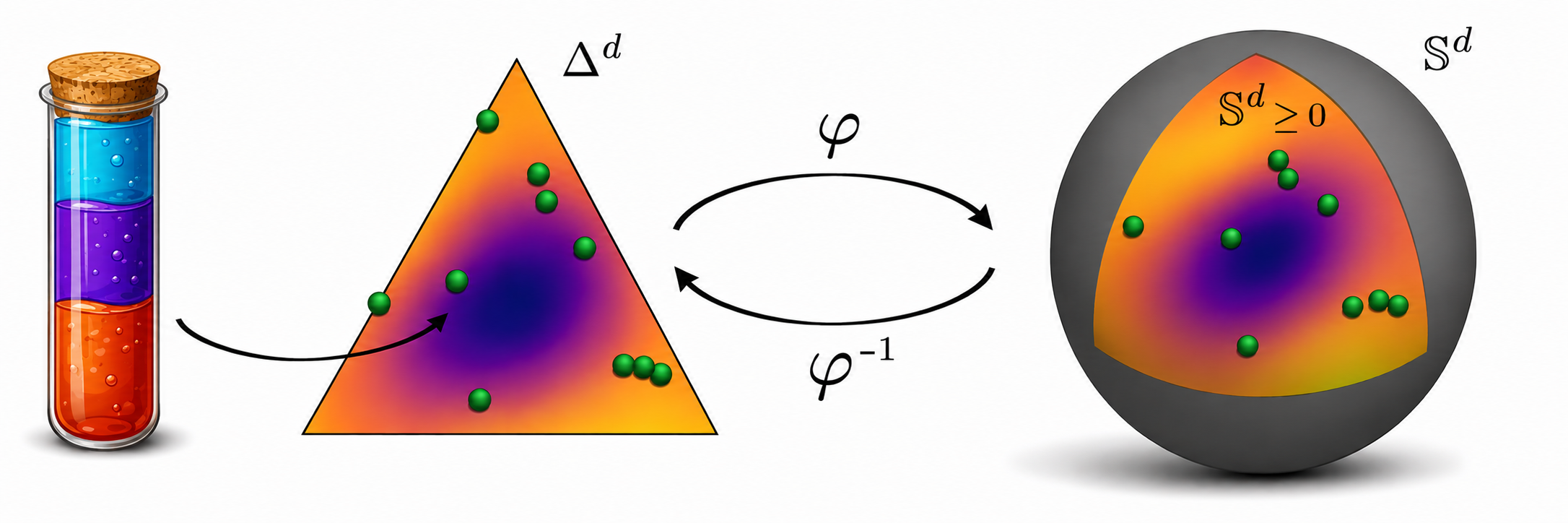}

  \caption{$\alpha$-GaBO leverages the sphere map $\varphi$, which establishes an isometry between the probability simplex $\simplex{d}$ and the positive orthant $\sphere{d}_{\geq0}$ of the sphere. BO on the simplex is performed via equivalent representations on the sphere.}
  \label{fig:teaser}
\end{figure}

The performance and scalability of BO can be significantly enhanced by introducing inductive bias, specifically prior information about the search space. In particular, geometry-aware Bayesian optimization (GaBO) exploits the geometry of the search space to seek optimum on Riemannian manifolds~\citep{jaquier20a, jaquier20b, jaquier22a}. The key ingredients of GaBO are Riemannian kernels and acquisition function optimizers that capture the manifold's underlying geometric structure. 
The probability simplex, composed of vectors with non-negative entries summing to one, is a widely-encountered yet overlooked geometric domain. It naturally arise as the search space when optimizing probabilities and mixtures, e.g., in mixture of experts~\citep{masoudnia2014:mixtureexpertssurvey}, portfolio optimization~\citep{kalai2002efficient}, chemical mixture design~\citep{soares2007statistical}, robotics~\citep{modugno2016:softtaskpriorities,Jaquier22:SeqBlendSkills}, and population dynamics~\citep{bomze2002regularity}.
An initial attempt at geometry-aware BO on the probability simplex, so-called BORIS, was introduced by~\citet{candelieriBO}, who proposed a squared exponential kernel based on the Wasserstein distance. In their implementation, \citet{candelieriBO2} approximate the Wasserstein distance with the standard Euclidean norm, so that BORIS effectively reverses to a constrained Euclidean BO that ignores the intrinsic geometry of the domain. This approach was shown to yield to suboptimal performances for other non-Euclidean search spaces~\citep{jaquier20a, jaquier20b, jaquier22a}. From a theoretical perspective, a fully rigorous, geometry-aware Bayesian optimization framework specifically tailored to the probability simplex remains, to the best of our knowledge, an open area in the literature.\looseness-1

In this paper, we introduce $\alpha$-GaBO, a family of geometry-aware Bayesian optimization algorithms on the probability simplex building on its information-theoretic Riemannian structure (see Sec.~\ref{sec:Background} for a background). First, we leverage the Fisher-Rao metric to establish an isometry between the probability simplex and a subset of the unit hypersphere (see Fig.~\ref{fig:teaser}), allowing us to use well-established Riemannian spherical kernels~\citep{borovitskiy2020matern} on the probability simplex (Sec.~\ref{subsec:simplex-kernels}). Second, we exploit conjugate connections on information manifolds to build a one-parameter family of acquisition function optimizers accounting for the geometry of the probability simplex (Sec.~\ref{subsec:simplex-acqfct}). Focusing on two parameter values, we recover the Levi-Civita and exponential connections, both of which lead to closed-form expressions for the Riemannian operations required during optimization. We test $\alpha$-GaBO on different benchmark functions and real-world applications, including chemical mixtures, mixture of classifiers, and robotic multi-task control (Sec.~\ref{sec: experiments}). Our results show that $\alpha$-GaBO outperforms BORIS and other constrained Euclidean BO frameworks.
The source code is available at{
\href{https://github.com/FedericoPavesi/alphagabosimplex}{https://github.com/FedericoPavesi/alphagabosimplex}.

\section{Related work}
\label{sec: related work}

This section reviews the literature focused on optimizing different classes of functions over the probability simplex. 

\paragraph{Optimization of convex functions.}
Historically, research on optimization over the probability simplex has focused primarily on convex functions. The well-known Frank-Wolfe method~\citep{frankwolfe} exploits the convexity of the optimization domain to reduce the function value while explicitly remaining within the probability simplex~\citep{jaggi2013:revisitingfrankewolfe}. However, this method is known to display oscillatory behavior near the optima, which significantly slows down convergence. Faster convergence is achieved by adding a so-called ``away-step'' to alleviate oscillations~\citep{guelat1986}. In contrast, projected gradient descent~\citep{galantai2003projectors} consists of a simple gradient step followed by a projection onto the domain, i.e., the probability simplex.  
Despite ensuring feasibility, none of the aforementioned approaches accounts for the natural geometry of the probability simplex. In the broader context of optimization on manifolds, accounting for the geometry of the domain was shown to deliver computationally-superior algorithms~\citep{absil2008optimization,Boumal22:RiemannOpt,Weber23:RiemFW}. 
Recently, ~\citet{chock2025} proposed to map the probability simplex to the positive orthant of the unit sphere, on which the objective function is optimized via Riemannian optimization with proven fast convergence rate. In this paper, we similarly take advantage of the so-called sphere map to optimize black-box functions over the probability simplex on the sphere as a domain with simple, well-known geometry.

\paragraph{Optimization of black-box functions.}


Closely related to our work is BORIS, the BO framework from~\citet{candelieriBO} that defines a squared-exponential kernel over the probability simplex based on Wasserstein geometry. In practice, BORIS endows the probability simplex with the discrete metric, which reduces the Wasserstein distance to a closed-form expression equivalent to a Euclidean distance~\citep{candelieriBO2}. This simplification enables the use of the standard Euclidean squared-exponential kernel, effectively treating the probability simplex as a Euclidean space. Moreover, the Riemannian optimization of the acquisition function is hindered by the inherent complexity of Wasserstein geometry, which does not allow for closed-form Riemannian gradients and operations. Therefore, BORIS relies on Euclidean geometry, which only locally approximates the probability simplex geometry, and maximizes the acquisition function via constrained Euclidean optimization.



\section{Background}
\label{sec:Background}

This section introduces the fundamental concepts at the base of our approach. We refer to App.~\ref{appendix:IG} for an extended background on the probability simplex. 

\subsection{Riemannian manifolds}
\label{subsec:riemannmanifolds}
A finite dimensional topological manifold is a subset of the Euclidean space that can be covered by a countable collection of open sets, each diffeomorphic to a subset of a Euclidean space. The tangent space $\tangentspace{\bm{x}}$ is the space of derivations at a point $\bm{x} \in \manifold$. A vector field $X \in \mathfrak{X}(\manifold)$ is a continuous map that assigns a tangent vector $\bm{v}\in\tangentspace{\bm{x}}$ to each point $\bm{x} \in \manifold$. A Riemannian manifold is a smooth manifold equipped with a symmetric, non-degenerate, positive-definite bilinear 2-form $g_{\bm{x}}: \tangentspace{\bm{x}} \times \tangentspace{\bm{x}} \to \mathbb{R}$ which changes smoothly with $\bm{x} \in \manifold$, called the Riemannian metric. The connection operator $\nabla:\mathfrak{X}(\manifold) \times \mathfrak{X}(\manifold) \to \mathfrak{X}(\manifold)$ allows the differentiation of vector fields along other vector fields. It plays a key role in Riemannian geometry as it allows the definition of Riemannian operations such as the Riemannian Hessian and the exponential map. While a smooth manifold admits many different connections, the Levi-Civita one is the most widely adopted as the unique metric-compatible and torsion-free connection. 

The Riemannian equivalents of gradient and Hessian are fundamental for Riemannian optimization. The Riemannian gradient of a function $f:\manifold\to\euclideanspace$ is defined as the (unique) vector field such that, for all $X \in \mathfrak{X}(\manifold)$, $\langle \grad_{\bm{x}} f, X \rangle_{\bm{x}}= \operatorname{d}_{\bm{x}} f[X]$ where $\operatorname{d}$ is the exterior differential. 
The Riemannian Hessian along $X \in \mathfrak{X}(\manifold)$ is defined via the connection as
$\mathcal{H}_{\bm{x}}f[X] = \nabla_X \grad_{\bm{x}} f$.
In Riemannian optimization algorithms, the search direction is obtained from the Riemannian gradient and Hessian and lies on the tangent space. The next iterate is then obtained by projecting the search direction onto the manifold via the exponential map, which defines a local diffeomorphism $\expmapblank{\bm{x}}:\mathcal{V}_{\bm{0}} \to \mathcal{U}_{\bm{x}}$ between sufficiently small neighborhoods $\mathcal{U}_{\bm{x}} \subseteq \manifold$ and $\mathcal{V}_{\bm{0}} \subset \tangentspace{\bm{x}}$. 

\subsection{Information geometry of the probability simplex}
\label{subsec: IGprobsimplex}

The $d$-dimensional probability simplex is the smooth manifold defined as ${\simplex{d} =\{\bm{x} \in \mathbb{R}^{d+1}|x_i \geq 0, \sum_{i=1}^{d+1}x_i=1\}}$, as illustrated in Fig~\ref{fig:teaser}-\emph{left}. The tangent space at each $\bm{\mu}\!\in\!\simplex{d}$ is defined as ${\tangentsimplex{d}{\bm{x}} = \{\bm{v} \in \mathbb{R}^{d+1}| \bm{x}^\trsp \bm{v}=0\}}$. The probability simplex can be endowed with the Fisher-Rao metric $g_{\bm{x}}(\bm{u}, \bm{v}) = \sum_i x_i u_i v_i$. 

In information geometry, it is common to work with the conjugate connection structure, which naturally arises from the geometric description of statistical models and allows the construction of a family of connections. Formally, the second order geometry is induced by conjugate connections $(\nabla, \nabla^*)$ which are related as 
\begin{equation}
    Xg(Y,Z) = g(\nabla_X Y,Z) + g(Y, \nabla^*_X Z)\mbox{,}
\end{equation}
for vector fields $X, Y, Z \in \mathfrak{X}(\Delta^d)$. For the probability simplex, the conjugate connections, also called mixture and exponential connections, are 
\begin{align}
    \nabla_XY(\bm{x}) &= (\bm{x}, D^2_XY(\bm{x}) + X(\bm{x})Y(\bm{x}))\mbox{,} \label{eq:exponential-connection}\\
    \nabla^*_XY(\bm{x}) &= \left(\bm{x}, D^2_XY(\bm{x}) +  \bm{x}^T \left(X(\bm{x})Y(\bm{x})\right)\right)\mbox{,} \label{eq:mixture-connection}
\end{align}
where $D^2$ denotes the Euclidean differentiation of vector fields. 
The $\alpha$-connection is the one-parameter family of connections defined via the convex combinations of~\eqref{eq:exponential-connection}-\eqref{eq:mixture-connection} as\looseness-1
\begin{equation}
\label{eq:alpha-connection}
    \nabla^{(\alpha)}_XY(\bm{x}) = \frac{1+\alpha}{2} \nabla_XY(\bm{x}) + \frac{1-\alpha}{2}\nabla^*_XY(\bm{x}),
\end{equation}
with $\alpha \in [-1,1]$.
Note that we recover the mixture and exponential connections for $\alpha=1$ and $\alpha=-1$.
Moreover, the Levi-Civita connection is obtained as $\nabla^{(\text{LC})}=\frac{\nabla + \nabla^*}{2}$ for $\alpha=0$.
The choice of $\alpha$ has a significant impact on optimization, as it determines the related Riemannian operations. A notable example is the exponential map, whose domain is a bounded Euclidean subset with border for all $\alpha\neq-1$. Instead, when $\alpha=-1$, the domain is the full Euclidean space, i.e., $\mathcal{V}_{\bm{0}}\cong \euclideanspace^d$, except for $\bm{x}$ on the simplex boundary, where it is some half-space $\{\euclideanspace^k_+ \times \euclideanspace^{d-k}:k<d\}$. In other words, all tangent vectors at a point $\bm{x}$ in the interior of the simplex are within the domain of the exponential map and the border of the simplex can be reached by mapping tangent vectors at the limit to infinity. 
A more detailed treatment can be found in standard information geometry textbooks~\citep{amari2016information, ay2017information} or in~\citep{Pistone2019Information}, see also App.~\ref{appendix:IGSimplex}.

\subsection{Bayesian optimization}
\label{subsec:BO}

Bayesian optimization (BO) is a class of sequential search algorithms aiming to optimize an unknown objective function $F$ defined over a domain $\mathcal{X}$. The function $F$ has no simple closed form, but can be observed point-wise and up-to-noise at query points $\bm{x}\!\in\!\mathcal{X}$. 
BO switches the optimization problem to a surrogate a random field $f$ approximating $F$. Optimization is then carried by iteratively choosing the query point based on its utility according to the surrogate.  

The approximating random field is commonly chosen as a Gaussian process $f \sim \mathcal{GP}(m, k)$, uniquely specified by its mean $m\!:\!\mathcal{X} \!\to\! \euclideanspace$ and positive-definite kernel, or covariance function, $k\!:\!\mathcal{X} \!\times\! \mathcal{X} \!\to\! \euclideanspace$~\citep{williams1995gaussian}. This latter is the fundamental element of Gaussian processes as it defines similarities between points of the domain, thereby encoding our prior about the unknown objective function $F$.
At each iteration $N$, the Gaussian process is conditioned on the collected dataset $D = \{(\bm{x}_i, y_i)\}_{i=1}^N$, where $y_i \in \mathbb{R}$ is the realization of the objective function at $\bm{x}_i$, possibly corrupted by Gaussian-centered noise with variance $\sigma^2$. The posterior process is obtained by updating moments via Bayes rule, resulting in a Gaussian process with mean and covariance functions
\begin{align}
    \label{eq:gp_mean}
    m_D(\bm{x}) &= m(\bm{x}) + k(\bm{x}, D) \bm{\Sigma}^{-1} (\bm{y} - m(D)), \\ 
    \label{eq:gp_cov}
    k_D(\bm{x}, \bm{x}') &= k(\bm{x}, \bm{x}') - k(\bm{x}, D) \bm{\Sigma}^{-1} k(D, \bm{x}'),
\end{align}
with $\Sigma = K(D, D) + \sigma^2 I$, $\bm{y}=(y_1, \dots, y_N)^\trsp$ and where $k(\bm{x}, D)$ is the row vector with components $k_{i} = k(\bm{x}, \bm{x}_i)$.

The next query point is then selected by optimizing an acquisition function constructed from the posterior process. Intuitively, the acquisition function quantifies the utility of evaluating the objective function at a given point, resolving the \emph{explore-exploit} tradeoff~\citep{shahriari2015:surveyBO}. The next point to evaluate is then chosen as the maximizer of the acquisition function, i.e., $\bm{x}_{N+1} = \operatorname{arg}\max_{\bm{x} \in \mathcal{X}} A(f(\bm{x}|D))$.
In contrast to the original function $F$, the acquisition function is differentiable and cheap to evaluate. In this paper, we employ two common acquisition functions, namely Expected Improvement (EI)~\citep{Mockus1978:EI} and Lower Confidence Bound (LCB)~\citep{Auer2022:UCB}. 
Finally, the target function $y_{N+1}=F(\bm{x}_{N+1})$ is evaluated at $\bm{x}_{N+1}$ and added to the dataset $D$. The procedure is iterated by updating the posterior process~\eqref{eq:gp_mean}-\eqref{eq:gp_cov} and maximizing the acquisition function until a certain criterion is met.
It is possible to prove~\citep{garnett_bayesopt} that, under certain assumptions on the parameter space, BO converges to the optimum. 

\subsection{Geometry-aware BO}
\label{subsec:GaBO}

Geometry-aware Bayesian optimization (GaBO) addresses the case where the search space $\mathcal{X}$ is a Riemannian manifold instead of Euclidean space, as in Sec.~\ref{subsec:BO}.
This is achieved by (1) defining positive-definite kernels measuring the similarity of points on the manifold, and (2) optimizing the acquisition function on the manifold.

\paragraph{Kernels on manifolds.}
\label{parag:pdkernels}
Defining a valid kernel on a Riemannian manifold is a non-trivial task as naive approaches that replace the Euclidean distance with the Riemannian one in isotropic kernels fail to capture the manifold's geometric structure. Such kernels are not guaranteed to be positive definite, and do not necessarily preserve the stationarity or sample path regularity properties of their Euclidean counterparts~\citep{feragen2015geodesic, jaquier20a, jaquier20b}. 
\citet{borovitskiy2020matern} developed solutions for compact manifolds by characterizing Matérn kernels by the spectral decomposition of the Laplace-Beltrami operator, leading to the general form
\begin{equation}
    \label{eq:compact-kernel}
    k_\nu(x, x') = \frac{\sigma^2}{C_\nu} \sum_{n=0}^{\infty} \Phi(\lambda_n) \phi_n(x)\phi_n(x'),
\end{equation}
where $\phi_n$ are the Laplace-Beltrami eigenfunctions, $\Phi$ is an operator defining the kernel by acting on the respective eigenvalues $\lambda_n$, $\sigma$ is an hyperparameter, and $C_\nu$ is a normalizing constant. 
Extensions to Lie groups and homogeneous spaces are provided by~\citep{azangulov2024stationary, azangulov2024stationary2}.
This class of Riemannian kernels was shown to lead to enhanced performances for BO on manifolds~\citep{jaquier22a}. 

\paragraph{Acquisition function optimization on manifolds.}
\label{parag:acqfunmanifolds}
In the geometric setting, the acquisition function is a real-valued function expressed in terms of Gaussian process posterior moments over the Riemannian domain $\mathcal{X}$. \citet{jaquier20a} proposed to leverage Riemannian optimization algorithms~\citep{absil2008optimization, Boumal22:RiemannOpt} to optimize the acquisition function while accounting for the manifold's geometry. In this setting, the Riemannian gradient and Hessian functions are used to determine a search direction on the tangent space $\mathcal{T}_{\bm{x}}\mathcal{X}$, which is then mapped from the tangent space back onto the manifold using the exponential map. Similarly to the Euclidean case, the search direction is determined by the optimization algorithm at hand.

\section{$\alpha$-simplex Geometry-aware Bayesian Optimization}
\label{sec:alpha-GaBO}
In this section, we introduce $\alpha$-simplex Geometry-aware Bayesian Optimization ($\alpha$-GaBO), a one-parameter family of BO algorithms that handle the probability simplex as optimization domain, i.e., $\mathcal{X}=\simplex{d}$. While the family is theoretically well-defined for $\alpha \in [-1, 1]$, we subsequently focus on two parameter values $\alpha=\{-1, 0\}$, leading to closed-form expressions for the Riemannian operations.
Similarly to standard GaBO, the key elements of $\alpha$-GaBO are: (1) Gaussian processes equipped with kernels that properly capture similarities between points on the simplex, and (2) optimization methods for maximizing the acquisition function over this domain.
Unlike the manifolds considered in~\citep{jaquier20a,jaquier22a}, the probability simplex is a Riemannian manifold with boundary, thus introducing two specific challenges.
First, the literature does not provide a general construction for positive-definite kernels on manifolds with boundary.
Second, considering the boundary is essential in practice, since optimal solutions may lie on vertices or $k$-faces of the simplex. In particular, vertices correspond to degenerate distributions concentrating all mass on a single component, while $k$-faces correspond to mixtures of $k<d$ components.
We address the first challenge by exploiting the isometry between the simplex and a subset of the sphere (see Fig.~\ref{fig:teaser}) to construct valid kernels.
To handle the second challenge, we introduce a family of information-geometric optimizers based on the Riemannian $\alpha$-connection~\eqref{eq:alpha-connection}, where the parameter $\alpha$ allows us to incorporate prior knowledge about the information geometry of the simplex.

\subsection{Kernels on the probability simplex}
\label{subsec:simplex-kernels}
The kernel~\eqref{eq:compact-kernel} is valid only for compact Riemannian manifolds without boundary. Kernels on manifolds with boundary are not treated in literature and existing results do not trivially extend to this setting. For the probability simplex, we propose to leverage an isometry with a compact subset of the sphere on which the construction of Matérn kernels is well explored~\citep{borovitskiy2020matern}. Formally, we denote the positive orthant of the hypersphere of radius $r$ as
\begin{equation}
    \sphere{d}_{\geq0}(r)=\{\bm{x}\in \mathbb{R}^{d+1}| x_i\geq0,\|\bm{x}\|_2=r \}\mbox{.}
\end{equation}
The sphere map, defined as~\citep{amari2016information, ay2017information}
\begin{equation}
\label{eq:sphere_map}
    \varphi:\Delta^d \to \sphere{d}_{\geq0}(2): \bm{x} \mapsto 2\sqrt{\bm{x}}\mbox{,}
\end{equation}
where the square root is taken element-wise, is a diffeomorphism between the probability simplex and the positive orthant $\sphere{d}_{\geq0}(2)$. When endowing the probability simplex with the Fisher-Rao metric and the sphere with the usual restriction metric, the sphere map is also an isometry. In other words, it identifies the Riemannian geometry of the simplex $\simplex{d}$ to that of a hypersphere $\sphere{d}(2)$, thereby preserving distances and other geometric operations. Note that, in practice, it is more convenient to work with the standard radius-$1$ sphere $\sphere{d}$. This is achieved by scaling the map as $\frac{1}{2}\varphi$ and appropriately rescaling the metric to maintain the isometry. More details on the sphere map are in App.~\ref{appendix:IGSpheremap}. 

We utilize the sphere map to construct kernels on the probability simplex from kernels on the hypersphere. Matérn kernels on $\sphere{d}$ have the form~\citep{borovitskiy2020matern}
\begin{equation}
\label{eq:kernel-sphere}
    k_{\nu, \kappa, \sigma}^{\sphere{d}}(\bm{s}, \bm{s}') = \frac{\sigma^2}{C_\nu}\sum_{n=0}^\infty c_{n,d} \rho_{\nu}(n)\mathcal{C}_n^{\frac{d-1}{2}} \big( \cos\left(d_g(\bm{s}, \bm{s}')\right)\big)\mbox{,}
\end{equation}
where $\bm{s}, \bm{s}' \!\in\! \sphere{d}$, $d_g$ is the Riemannian distance, $\mathcal{C}_n^{(\cdot)}$ are Gegenbauer polynomials, $\rho_\nu(n)$ are the generalized Fourier coefficients of the spectral measure, $c_{n,d}$ are explicit constants, and $C_\nu$ is a normalization constant ensuring $k_{\nu, \kappa, \sigma}^{\sphere{d}}(\bm{s}, \bm{s}')=\sigma^2$. Explicit formulas for $\rho_\nu(n)$ and $c_{n,d}$ are in~\citep{borovitskiy2020matern}. 
As the spectral series of the kernel converges uniformly, the approximation precision is chosen by truncating the series at a certain $N$. 
We define Matérn kernels on the $d$-dimensional probability simplex as the pullback of the sphere kernel~\eqref{eq:kernel-sphere} via the sphere map as
\begin{equation}
\label{eq:kernel-simplex}
    k^{\simplex{d}}(\bm{x}, \bm{x}') = \frac{1}{2}\varphi^* \tilde{k}^{\sphere{d}}(\bm{x}, \bm{x}')=\tilde{k}^{\sphere{d}}\big(\frac{1}{2}\varphi(\bm{x}), \frac{1}{2}\varphi(\bm{x}')\big)\mbox{,}
\end{equation}
where $\bm{x}, \bm{x}' \in \Delta^d$ and $\tilde{k}_{\nu, \kappa, \sigma}^{\sphere{d}}$ denotes the restriction of~\eqref{eq:kernel-sphere} to $\sphere{d}_{\geq0}$. Note that the smoothness of the isometry guarantees the differentiability of $k_{\nu, \kappa, \sigma}^{\simplex{d}}$ up to the order of $k_{\nu, \kappa, \sigma}^{\sphere{d}}$. 

\begin{algorithm}[t]
\caption{$\alpha_{\text{-}1}$-GaBO}
\label{alg:mone_alphagabo}
\begin{algorithmic}[1]

\Require Objective function $F:\simplex{d}\!\to\!\euclideanspace$, initial dataset $D_0=\{\bm{x}_m,y_m\}_{m=1}^M$ with $\bm{x}_m\in\simplex{d}$,  budget $B$
\Ensure Recommendation $\bm{x}^*\in\simplex{d}$

\State Place the prior: 
$f \sim \mathcal{GP}(m, \textcolor{gabored}{k_{\theta}^{\simplex{d}}})$ with~\eqref{eq:kernel-simplex}

\State Initialize the dataset on $\textcolor{gabored}{\simplex{d}}$: $D \gets D_0$

\For{$N = 1,\ldots,B$}

    \State Update the posterior: $f \mid D \sim \mathcal{GP}(m_D, \textcolor{gabored}{k_{\theta,D}^{\simplex{d}}})$

    \State Select the next query:
    \Statex \hspace*{\algorithmicindent} $\bm{x}_{N} =
            \operatorname{arg}\max_{\textcolor{gabored}{\bm{x} \in \simplex{d}}} A(f(\bm{x}|D))$ 

    \State Evaluate the objective: $y_N \gets F(\bm{x}_{N})$
    \State Update the dataset: $D \gets D \cup \{(\bm{x}_N,y_N)\}$

\EndFor

\State \Return $(\bm{x}^*,y^*) = \operatorname{arg}\max_{(\bm{x},y)\in D} y$

\end{algorithmic}
\end{algorithm}

\subsection{Acquisition function optimization}
\label{subsec:simplex-acqfct}

After conditioning the Gaussian process with kernel~\eqref{eq:kernel-simplex} modeling the unknown function $F$, the next query point $\bm{x}_{N+1}\in\simplex{d}$ is obtained by maximizing the acquisition function $A$. Similar to GaBO~\citep{jaquier20a,jaquier22a}, we leverage Riemannian optimization algorithms~\citep{absil2008optimization,Boumal22:RiemannOpt} to account for the geometry of the probability simplex. 
We propose a one-parameter family of acquisition function optimizers based on the $\alpha$-connection~\eqref{eq:alpha-connection}. Specifically, the search direction ${\bm{\eta}_k\in\tangentsimplex{d}{\bm{x}_k}}$ and iterate $\bm{x}_k\in\simplex{d}$ are iteratively updated as
\begin{equation}
    \bm{\eta}_k = h(\bm{x}_k, \grad_{\bm{x}_k} A, \mathcal{H}_{\bm{x}_k}^\alpha A) \quad \text{and} \quad \bm{x}_{k+1} = \expmapblank{\bm{x}_{k}}^\alpha(\eta_k),
\end{equation}
where the function $h$ is defined based on the choice of optimization algorithm, e.g., for a simple gradient descent $h = - \gamma \; \grad_{\bm{x}_k} A$, and the Riemannian operations are determined based on the choice of parameter $\alpha$.
The choice of this parameter reflects prior knowledge according to the connection between the simplex and the reference statistical model~\citep{amari2016information, ay2017information}. 
We focus on two parameter values $\alpha = \{-1, 0\}$ leading to closed-form expressions for the Riemannian operations, see App.~\ref{appendix:IGSimplex}. The resulting algorithms, hereinafter referred to as $\alpha_{\text{-}1}$- and $\alpha_{0}$-GaBO, are summarized in Algorithms~\ref{alg:mone_alphagabo} and~\ref{alg:zero_alphagabo}.

\begin{algorithm}[t]
\caption{$\alpha_{0}$-GaBO}
\label{alg:zero_alphagabo}
\begin{algorithmic}[1]

\Require Objective function $F:\simplex{d}\!\to\!\euclideanspace$, initial dataset $D_0=\{\bm{x}_m,y_m\}_{m=1}^M$ with $\bm{x}_m\in\simplex{d}$, budget $B$
\Ensure Recommendation $\bm{x}^*\in\simplex{d}$ 

\State Map $D_0$ to the sphere: $\textcolor{gaboyellowdark}{D_0^{\sphere{d}}}=\{\textcolor{gaboyellowdark}{\frac{1}{2}\varphi}(\bm{x}_m),y_m\}_{m=1}^M$
\State Place the prior: 
$f \sim \mathcal{GP}(m, \textcolor{gaboyellowdark}{k_{\theta}^{\sphere{d}}})$ with~\eqref{eq:kernel-sphere}

\State Initialize the dataset on $\textcolor{gaboyellowdark}{\sphere{d}_{\geq0}}$: $D \gets \textcolor{gaboyellowdark}{D_0^{\sphere{d}}}$

\For{$N = 1,\ldots,B$}

    \State Update the posterior: $f \mid D \sim \mathcal{GP}(m_D, \textcolor{gaboyellowdark}{k_{\theta,D}^{\sphere{d}}})$

    \State Select the next query:
    \Statex \hspace*{\algorithmicindent}$\bm{s}_{N} =
            \operatorname{arg}\max_{\textcolor{gaboyellowdark}{\bm{s} \in \sphere{d}_{\geq0}}} A(f(\bm{s}|D))$ 

    \State Evaluate the objective: $y_N \gets F(\textcolor{gaboyellowdark}{2\varphi^{-1}}(\bm{s}_{N}))$
    \State Update the dataset: $D \gets D \cup \{(\bm{s}_N,y_N)\}$

\EndFor

\State \Return $(\bm{x}^*,y^*) = \operatorname{arg}\max_{(\textcolor{gaboyellowdark}{2\varphi^{-1}}(\bm{s}),y)\in D} y$
\end{algorithmic}
\end{algorithm}

By recovering the exponential connection~\eqref{eq:exponential-connection}, $\alpha_{\text{-}1}$-GaBO allows us to perform unconstrained Riemannian optimization on $\simplex{d}$. The corresponding exponential map is given as $\expmap{\bm{x}}{\bm{\eta}} = \frac{\exp(\bm{\eta})}{\exp(\bm{\eta})^\trsp \bm{x}} \odot \bm{x}$ with the exponential taken component wise and $\odot$ denotes the Hadamard product. 
It ensures that, for each point $\bm{x}$ in the interior of $\simplex{d}$, the domain of the exponential map is the entire tangent space $\mathcal{T}_{\bm{x}}\simplex{d} \cong \euclideanspace^d$. In other words, no tangent vector $\bm{\eta}\in\tangentsimplex{d}{\bm{x}}$ is mapped outside of $\simplex{d}$ and the border is reached at $\|\bm{\eta}\|\to\infty$. However, this implies that optimization methods cannot reach the border, additionally inducing numerically instabilities close to it. Therefore, $\alpha_{\text{-}1}$-GaBO is not suited for problems where the optimum may lie on the border of the simplex.

$\alpha_{0}$-GaBO recovers the Levi-Civita connection, equally balancing exponential and mixture geometry. The exponential map is ${\expmap{\bm{x}}{\bm{\eta}}=\big(\sqrt{\bm{x}} \cos (\frac{\| \bm{\eta}\|_F}{2}) + \frac{\sqrt{\bm{x}} \odot \bm{\eta}}{\|\bm{\eta}\|_F} \sin( \frac{\| \bm{\eta}\|_F}{2})\big)^2}$ where $\|\bm{\eta}\|_F\!=\!\sqrt{\bm{x}^T\bm{\eta}^2}$ and exponentiation and square root are taken component-wise. It allows tangent vectors to be mapped onto the border of $\simplex{d}$ at the expense of a constrained exponential map domain, as generic tangent vectors might be mapped outside the simplex.
As $\Delta^d$ is isometric to $\sphere{d}_{\geq0}$ and the Levi-Civita connection is the only metric-compatible connection, the geometric structure of the probability simplex is, for $\alpha=0$,  equivalent to the standard hypersphere geometry. Therefore, each iteration of $\alpha_{0}$-GaBO corresponds to conditioning a sphere Gaussian process with kernel~\eqref{eq:kernel-sphere}, maximizing the acquisition function via Riemannian optimization algorithms constrained to the sphere positive orthant $\sphere{d}_{\geq0}$~\citep{liu2020:RiemOpt-constrained}, and mapping the obtained query point $\bm{x}_{N+1}$ onto $\simplex{d}$ via the inverse sphere map, see also Algo.~\ref{alg:zero_alphagabo}. 

\textbf{Choice of $\alpha$.} When the simplex strictly represents probability distributions, its Riemannian structure is naturally characterized in terms of a $\alpha$-divergence, which determines the appropriate value of $\alpha$~\citep{amari2016information, ay2017information}.
In other cases, the most suitable $\alpha$ depends on the prior knowledge about the location of the optimum. When the optimum may lie on the border of $\simplex{d}$, we recommend using $\alpha_{0}$-GaBO. Conversely, when the optimum is expected to lie in the interior, we suggest to use $\alpha_{\text{-}1}$-GaBO as it enables simpler unconstrained Riemannian optimization over the entire simplex. Note that our experiments show that both $\alpha_{0}$- and $\alpha_{\text{-}1}$-GaBO perform similarly when the optimum lies in the interior, showcasing that constraining the optimization to the sphere positive orthant is not an issue in practice.

\begin{figure*}[t]
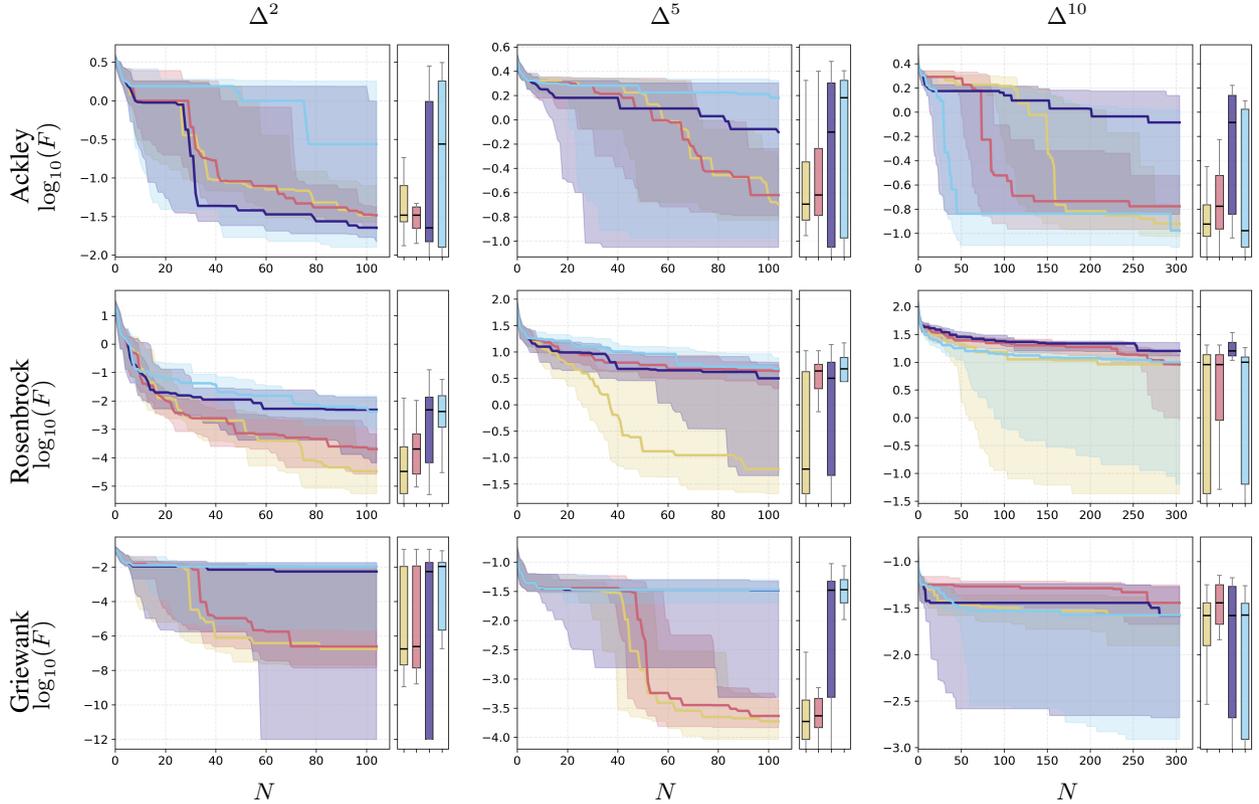

    \centering
    
    \setlength{\tabcolsep}{2pt}
    \renewcommand{\arraystretch}{1.2} 
    
    \begin{tabular}{m{0.7cm} c c c} 
        &\footnotesize  $\Delta^2$ &\footnotesize  $\Delta^5$ &\footnotesize  $\Delta^{10}$ \\[0.0em] 
        \begin{tabular}{c @{\hspace{1pt}} c}  
            \rotatebox[origin=c]{90}{Ackley} & \rotatebox[origin=c]{90}{\footnotesize $\log_{10}(F)$}
        \end{tabular} &
        \adjustimage{valign=c,width=0.3\textwidth}{Ackley_dim2_formal_with_alpha.png} &
        \adjustimage{valign=c,width=0.3\textwidth}{Ackley_dim5_formal_with_alpha.png} &
        \adjustimage{valign=c,width=0.3\textwidth}{Ackley_dim10_formal_with_alpha.png} \\[1em] 
        \begin{tabular}{c @{\hspace{1pt}} c} 
            \rotatebox[origin=c]{90}{Rosenbrock} & \rotatebox[origin=c]{90}{\footnotesize $\log_{10}(F)$}
        \end{tabular} &
        \adjustimage{valign=c,width=0.3\textwidth}{Rosenbrock_dim2_formal_with_alpha.png} &
        \adjustimage{valign=c,width=0.3\textwidth}{Rosenbrock_dim5_formal_with_alpha.png} &
        \adjustimage{valign=c,width=0.3\textwidth}{Rosenbrock_dim10_formal_with_alpha.png} \\[1em] 
        \begin{tabular}{c @{\hspace{1pt}} c} 
            \rotatebox[origin=c]{90}{Griewank} & \rotatebox[origin=c]{90}{\footnotesize $\log_{10}(F)$}
        \end{tabular} &
        \adjustimage{valign=c,width=0.3\textwidth}{Griewank_dim2_formal_with_alpha.png} &
        \adjustimage{valign=c,width=0.3\textwidth}{Griewank_dim5_formal_with_alpha.png} &
        \adjustimage{valign=c,width=0.3\textwidth}{Griewank_dim10_formal_with_alpha.png} \\ \vspace{-4pt} 
        & \footnotesize $N$ & \footnotesize $N$ & \footnotesize  $N$ \\
    \end{tabular}
    \caption{Logarithm of the regret (median and quartiles) and distribution over the final recommendation for $\alpha_0$-GaBO (\aogaboline), 
\mbox{$\alpha_{\text{-}1}$-GaBO} (\amogaboline), $\sphere{d}$-Eucl. BO (\spheuboline), and BORIS (\borisline) on benchmark functions.
}
\label{fig:benchmarks}
\end{figure*}

\begin{figure*}[t]
    \centering
    \setlength{\tabcolsep}{2pt} 
    \renewcommand{\arraystretch}{1.2} 
    \begin{tabular}{m{0.2cm} c c c}
        \raisebox{6.5ex}{\rotatebox[origin=c]{90}{\footnotesize $F$\strut}} &
        \begin{minipage}[c]{0.3\textwidth}
            \centering
            \includegraphics[width=\linewidth]{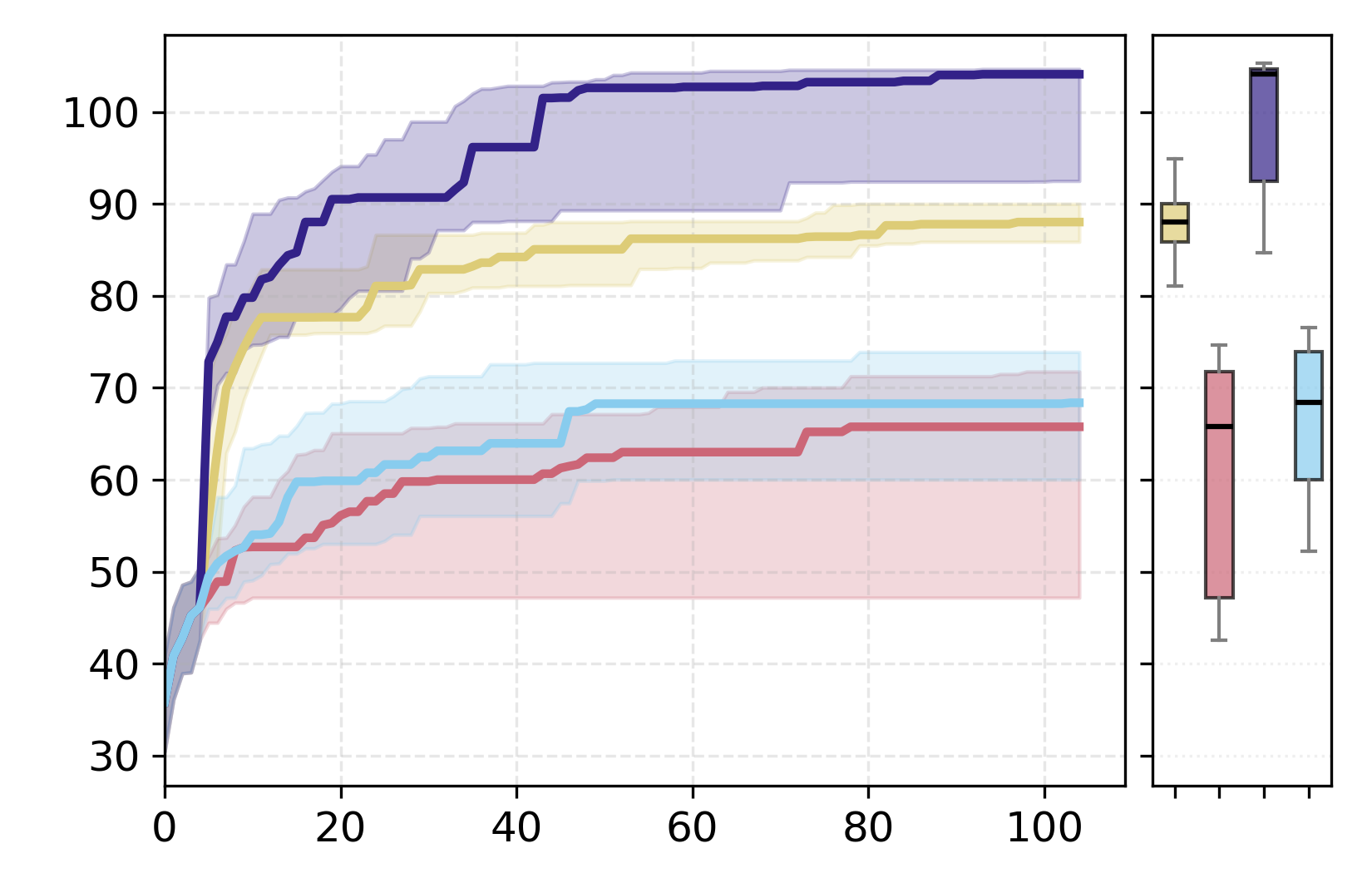}\par
            \vspace{-4pt} 
             \footnotesize $N$\par
            \subcaption{Concrete, $\Delta^6 \times [0,1]$.}
            \label{subfig4:concrete}
        \end{minipage} &
        \begin{minipage}[c]{0.3\textwidth}
            \centering
            \includegraphics[width=\linewidth]{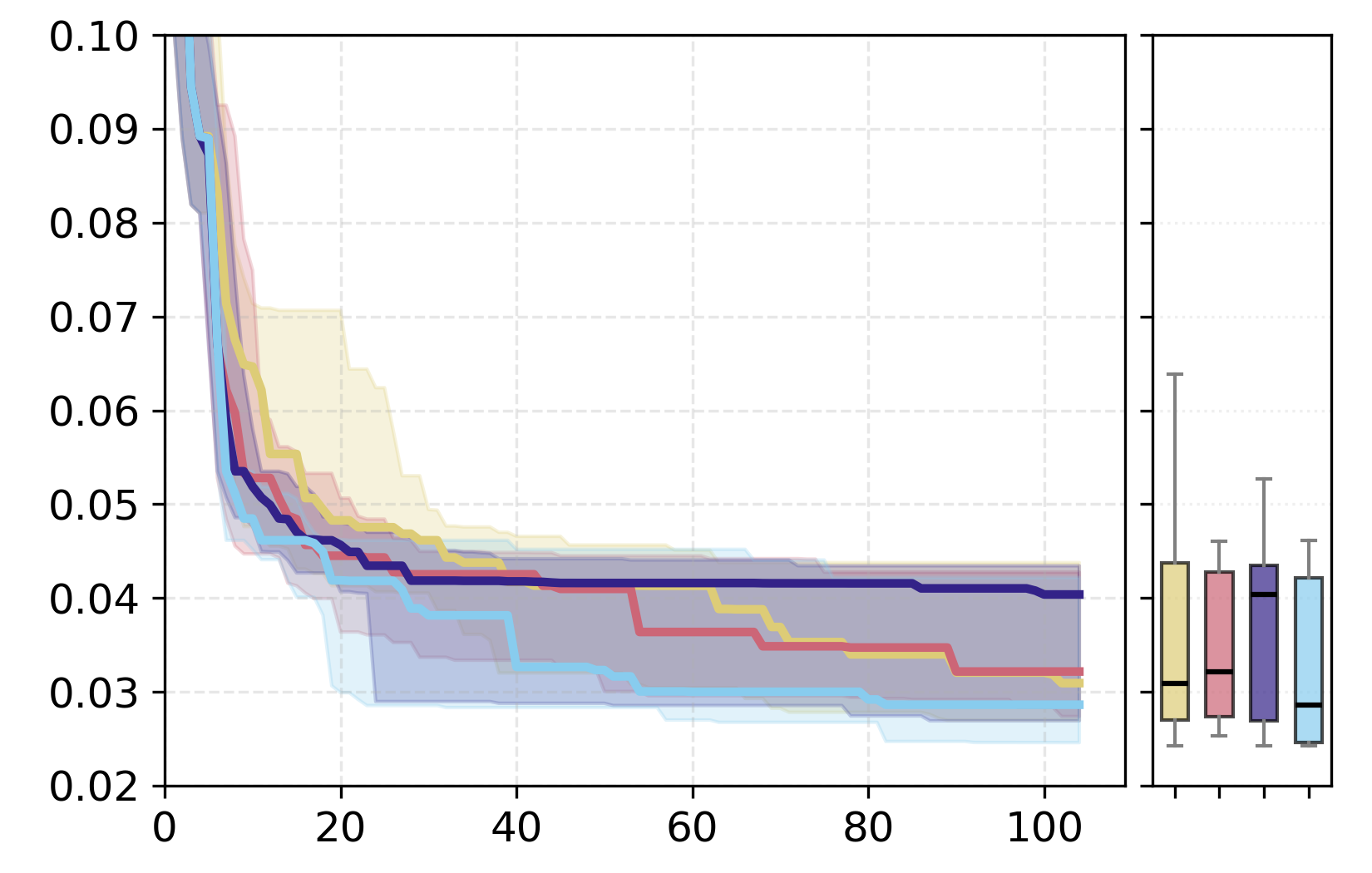}\par
            \vspace{-4pt} 
             \footnotesize $N$\par
            \subcaption{Photovoltaic WF3, $\Delta^3$.}
            \label{subfig4:wf3}
        \end{minipage} &
        \begin{minipage}[c]{0.3\textwidth}
            \centering
            \includegraphics[width=\linewidth]{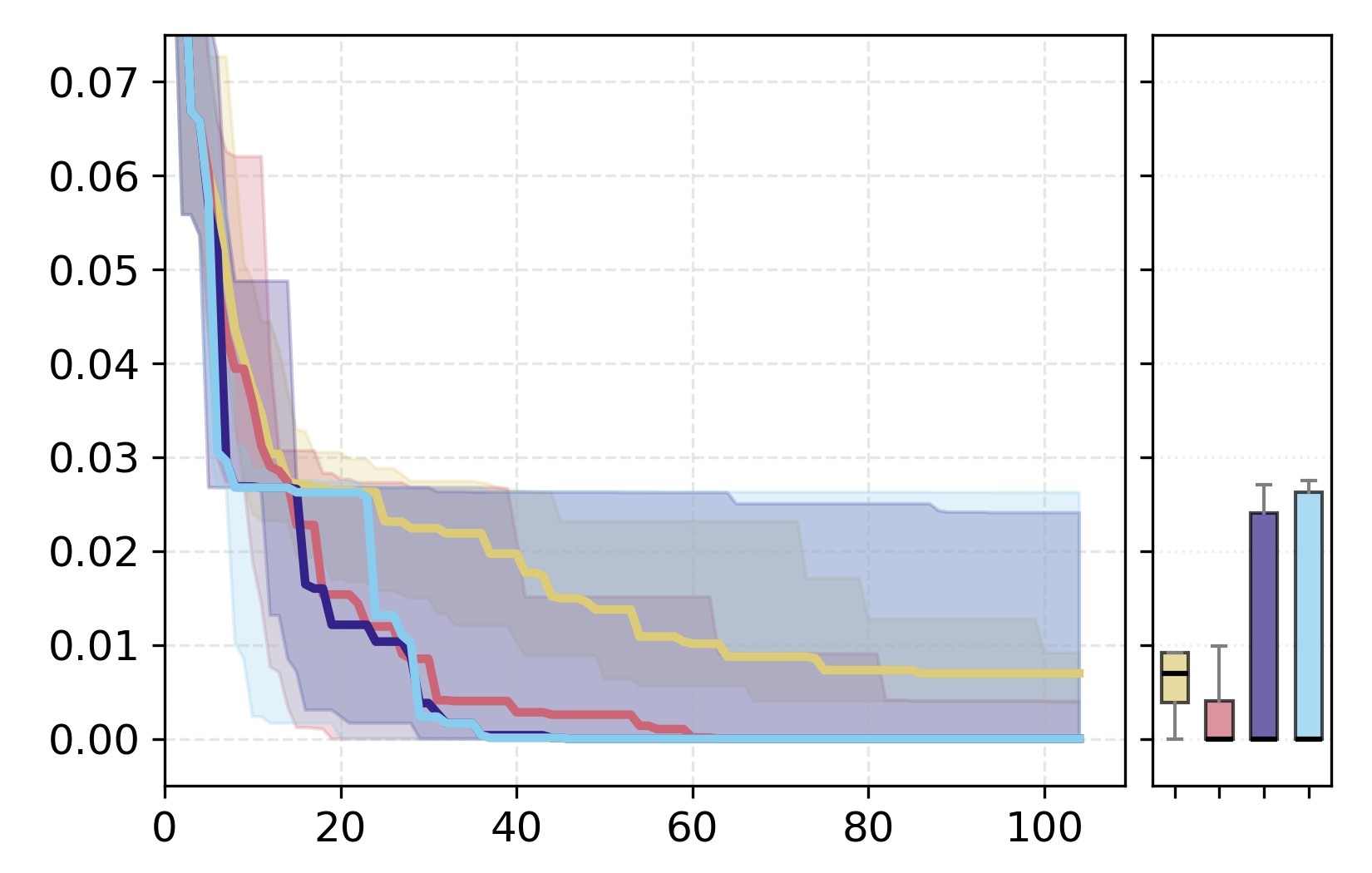}\par
            \vspace{-4pt} 
             \footnotesize $N$\par
            \subcaption{Photovoltaic PCE10, $\Delta^3$.}
            \label{subfig4:pce10}
        \end{minipage} \\
    \end{tabular}
    \caption{Regret (median and quartiles) and distribution over the final recommendation for $\alpha_0$-GaBO (\aogaboline), \mbox{$\alpha_{\text{-}1}$-GaBO} (\amogaboline), $\sphere{d}$-Eucl. BO (\spheuboline), and BORIS (\borisline) on optimal mixture datsets. Note that the objective function is maximized for concrete.}
    \label{fig:realworld_all}
\end{figure*}

\section{Experiments}
\label{sec: experiments}

We test \mbox{$\alpha_{\text{-}1}$-GaBO} and $\alpha_0$-GaBO on several benchmark test functions and on three real-world applications. 
We use the Riemannian squared exponential kernel motivated by the fact for the Riemannian case the series converges faster than Matérn ones~\citep{borovitskiy2020matern}, thus limiting the truncation error. An exception is made for Olympus, where we found it beneficial to use $\nu\!=\!\frac{5}{2}$ Riemannian Matérn kernel. We use EI~\citep{Mockus1978:EI} as acquisition function for all experiments, except for Olympus where we observe better performances using LCB~\citep{Auer2022:UCB}. The acquisition function is optimized using (constrained) Riemannian trust regions~\citep{absil07:trust-region,jaquier20b}. The practical implementation can be viewed as an extension of the GaBOTorch library~\citep{jaquier20a, jaquier22a}.\looseness-1

We compare $\alpha$-GaBO against standard constrained Euclidean alternatives~\citep{jaquier20a}, where a standard Euclidean BO procedure with Euclidean kernel is defined in the ambient space and constrained on the manifold of interest to optimize the acquisition function. Consistently with the two considered $\alpha$-GaBO models, we test constrained Euclidean BO on the probability simplex and on the sphere coupled with the sphere map~\eqref{eq:sphere_map}. Note that, in light of~\citep{candelieriBO2}, constrained Euclidean BO on the probability simplex is equivalent to BORIS~\citep{candelieriBO}. For all the experiments, we initialize the BO algorithm with $5$ random sample and report results across $25$ different seeds.

\subsection{Benchmark Functions}
\label{subsec:benchmarks}

We study the performance of $\alpha$-GaBO on the Ackley, Rosenbrock and Griewank benchmark functions projected on the probability simplex. Following the technique by~\citep{jaquier20a}, we represent the test function on a tangent space $\tangentsimplex{d}{\bm{x}}$ at a point $\bm{x} \in \simplex{d}$ and project it onto $\simplex{d}$ via the inverse of the exponential map, called logarithmic map $\logmapblank{\bm{x}}: \simplex{d} \to \tangentsimplex{d}{\bm{x}}$. 
We pick the base point $\bm{x}$ as the center of the probability simplex. The optimum of the Ackley, Rosenbrock and Griewank benchmark function is located at this point by definition. In the benchmark experiments, we choose an optimum in the interior of the simplex to ensure a fair comparison between all models, as placing it on the border would have penalized $\alpha_{\text{-}1}$-GaBO. For all benchmarks, we run the experiments for dimensions $d \in \{2, 5, 10\}$. 

Fig.~\ref{fig:benchmarks} shows the evolution of the median and the distribution of the logarithm of the simple regret of the final recommendation. We observe that the $\alpha$-GaBO models generally match or outperform their constrained Euclidean counterparts by converging more data-efficiently to lower function values and with lower variance over the final recommendation. This is especially evident for dimensions $d=\{2,5\}$. Despite attaining lower values for Ackley on $\simplex{2}$, the Euclidean BO on the sphere is characterized by a much higher variance than $\alpha$-GaBO, indicating less consistent performance. The differences across models are less pronounced when the dimensionality increases. 
Finally, $\alpha_0$-GaBO and \mbox{$\alpha_{\text{-}1}$-GaBO} generally display similar performances. 

Table~\ref{tab:all_functions} reports the median and quartiles of the final recommendation, together with Mann-Whitney signed-rank tests to assess whether differences in the distribution of the simple regret of the final recommendation are statistically significant. 
Table~\ref{tab:all_functions} shows that $\alpha$-GaBO consistently ranks among the best-performing approaches with more accurate median values and lower interquartile range. Moreover, even in cases where Euclidean methods achieve slightly better median values, the differences are often not statistically significant, indicating comparable regret distribution with the best-performing Euclidean approach.
App.~\ref{appendix:ablation} ablates the effect of the Riemannian kernel and Riemannian optimization, showing that the kernel leads to the biggest part of the performance gain. Runtimes are reported in App.~\ref{app:runtimes}. 


\begin{table*}[tbp]
\centering
\footnotesize
\caption{Median and interquartile range of the final recommendation, and $p$-values of Mann-Whitney tests on benchmark functions. Statistical tests are against the best BO alternative per dimension (${}^*$: $p<\!0.05$, ${}^{**}$: $p<\!0.01$, ${}^{***}$: $p<\!0.001$).}
\label{tab:all_functions}
\resizebox{.85\linewidth}{!}{
\begin{threeparttable}
\begin{tabular}{cl*{9}{c}}
\toprule
$d$ & Method & \multicolumn{3}{c}{Ackley} & \multicolumn{3}{c}{Rosenbrock} & \multicolumn{3}{c}{Griewank} \\
\cmidrule(lr){3-5} \cmidrule(lr){6-8} \cmidrule(lr){9-11}
 & & Median & IQR & $p$ (vs best) & Median & IQR & $p$ (vs best) & Median & IQR & $p$ (vs best) \\
\midrule
\multirow{4}{*}{2}
& $\mathbb{S}^d$-Eucl. BO & $\bm{0.020}$ & $0.987$ & -- & $0.002$ & $0.008$ & $1.0 \times 10^{-4}$${}^{***}$ & $0.005$ & $0.018$ & $8.1 \times 10^{-1}$ \\
& BORIS & \underline{$0.023$} & $1.793$ & $9.6 \times 10^{-1}$ & $0.006$ & $0.008$ & $1.3 \times 10^{-5}$${}^{***}$ & $0.011$ & $0.025$ & $1.8 \times 10^{-3}$${}^{**}$ \\
& $\alpha_0$-GaBO & $0.033$ & $\bm{0.022}$ & $3.7 \times 10^{-1}$ & {$\bm{0.000}$} & {$\bm{0.000}$} & -- & {$\bm{0.000}$} & \underline{$0.007$} & -- \\
& $\alpha_{-1}$-GaBO & $0.036$ & \underline{$0.030$} & $2.0 \times 10^{-1}$ & \underline{$0.000$} & \underline{$0.001$} & $2.2 \times 10^{-3}$${}^{**}$ & \underline{$0.000$} & {$\bm{0.007}$} & $5.0 \times 10^{-2}$ \\
\cmidrule(lr){2-11}
\multirow{4}{*}{5}
& $\mathbb{S}^d$-Eucl. BO & $0.829$ & $2.044$ & $7.9 \times 10^{-1}$ & \underline{$3.179$} & $5.209$ & $8.4 \times 10^{-2}$ & $0.034$ & $0.048$ & $2.4 \times 10^{-4}$${}^{***}$ \\
& BORIS & $1.634$ & $2.047$ & $4.7 \times 10^{-1}$ & $4.776$ & $5.056$ & $9.7 \times 10^{-4}$${}^{***}$ & $0.031$ & $0.043$ & $2.0 \times 10^{-5}$${}^{***}$ \\
& $\alpha_0$-GaBO & {$\bm{0.261}$} & \underline{$0.323$} & -- & {$\bm{0.078}$} & \underline{$4.451$} & -- & {$\bm{0.000}$} & {$\bm{0.001}$} & -- \\
& $\alpha_{-1}$-GaBO & \underline{$0.275$} & {$\bm{0.245}$} & $4.0 \times 10^{-1}$ & $5.656$ & {$\bm{2.691}$} & $1.8 \times 10^{-5}$${}^{***}$ & \underline{$0.001$} & \underline{$0.003$} & $4.8 \times 10^{-2}$${}^{*}$\\
\cmidrule(lr){2-11}
\multirow{4}{*}{10}
& $\mathbb{S}^d$-Eucl. BO & $0.714$ & \underline{$1.056$} & $2.2 \times 10^{-1}$ & $16.046$ & {$\bm{5.839}$} & $1.4 \times 10^{-3}$${}^{**}$& {$\bm{0.022}$} & $0.040$ & -- \\
& BORIS & $0.866$ & $1.313$ & $8.8 \times 10^{-1}$ & $14.280$ & $17.379$ & $1.6 \times 10^{-1}$ & \underline{$0.029$} & $0.045$ & $5.7 \times 10^{-1}$ \\
& $\alpha_0$-GaBO & {$\bm{0.147}$} & $1.057$ & -- & {$\bm{7.764}$} & $15.942$ & -- & $0.029$ & {$\bm{0.022}$} & $7.9 \times 10^{-1}$ \\
& $\alpha_{-1}$-GaBO & \underline{$0.289$} & {$\bm{0.353}$} & $5.5 \times 10^{-1}$ & \underline{$12.460$} & \underline{$14.683$} & $3.1 \times 10^{-1}$ & $0.034$ & \underline{$0.030$} & $1.4 \times 10^{-1}$ \\
\bottomrule
\end{tabular}
\end{threeparttable}
}
\end{table*}

\subsection{Optimal mixtures of components}

As first real-world application, we evaluate the performance of $\alpha$-GaBO to find optimal mixtures of components. We consider three scenarios: the \textit{concrete compressive strength} dataset~\citep{Yeh1998ModelingOS} from the UCI repository\footnote{https://archive.ics.uci.edu/datasets}~\citep{uci} and two chemical mixtures from the Olympus repository\footnote{https://github.com/aspuru-guzik-group/olympus}~\citep{olympus_2021, olympus_2023}.

\paragraph{Concrete compressive strength.}
This dataset is composed of $1090$ observations with $7$ variables containing the mixture quantities of the components of concrete and $1$ variable containing the number of days from when the mixture was made. The objective is to find the mixture quantities and number of days maximizing the compressive strength of the resulting concrete in MPa. As the parameter space is the product of manifolds $\simplex{6} \times [0, 1]$, we define the $\alpha$-GaBO kernel as the product of the Riemannian squared exponential kernel on $\simplex{6}$ and a $1$-dimensional Euclidean squared exponential kernel and optimize the acquisition function on the product of manifolds. As no oracle is available, we follow the perspective of~\citet{Yeh1998ModelingOS} and use the dataset to train an artificial neural network as an approximation of the real function $F$. We do so with an MLP with $3$ hidden layers, each followed by a dropout layer, and ReLU activations. More detail about the MLP structure are in App.~\ref{appendix:MLPoracle}. 

Fig.~\ref{subfig4:concrete} and Table~\ref{tab:all_real_world} show the performance results. 
We observe that the BO models optimizing on topologically spherical manifolds converge to significantly higher function values with the Euclidean BO slightly outperforming $\alpha_0$-GaBO. This can be explained by the fact that, for this dataset, the optimum tends to lie on the border of the simplex which cannot be reached by \mbox{$\alpha_{\text{-}1}$-GaBO}. 
This result confirms the interest of considering the data topology and geometry in BO.\looseness-1

\paragraph{Olympus.}
We consider two datasets, namely Photo bleaching PCE10 and WF3, which document the photodegradation behavior of polymer blends for organic solar cells under light exposure. Individual data points encode the polymer composition ratios of each blend together with its measured degradation response. Both dataset have $4$ ingredients, $3$ of them are the same (P3HT, PCBM, oIDTBR) while the last is dataset specific (PCE10 or WF3). The objective is to find the mixture of ingredients which minimizes the photodegradation. As the mixture is naturally expressed as a ratio of each component over unity, the probability simplex $\simplex{3}$ is the natural optimization domain for such tasks. For each dataset, we use the oracle provided by the library. 

Fig.~\ref{subfig4:wf3} shows the performance results for the WF3 dataset. We observe that all models perform similarly with the exception of Euclidean BO on the sphere, whose median value is outperformed by all alternatives. For the PCE10 dataset, Fig.~\ref{subfig4:pce10} shows that the $\alpha$-GaBO models exhibit lower function values and significantly lower variances indicating more consistent performance, as confirmed by Table~\ref{tab:all_real_world}.

\newlength{\fullwidth}
\setlength{\fullwidth}{2\columnwidth+\columnsep}

\begin{figure}[t]
    \centering
    \renewcommand{\arraystretch}{1.2} 
    \begin{tabular}{@{}m{0.3cm}@{}c@{}}
        \rotatebox[origin=c]{90}{ \footnotesize $F$} &
        \begin{minipage}[c]{0.3\fullwidth}
            \centering
            \includegraphics[width=\linewidth]{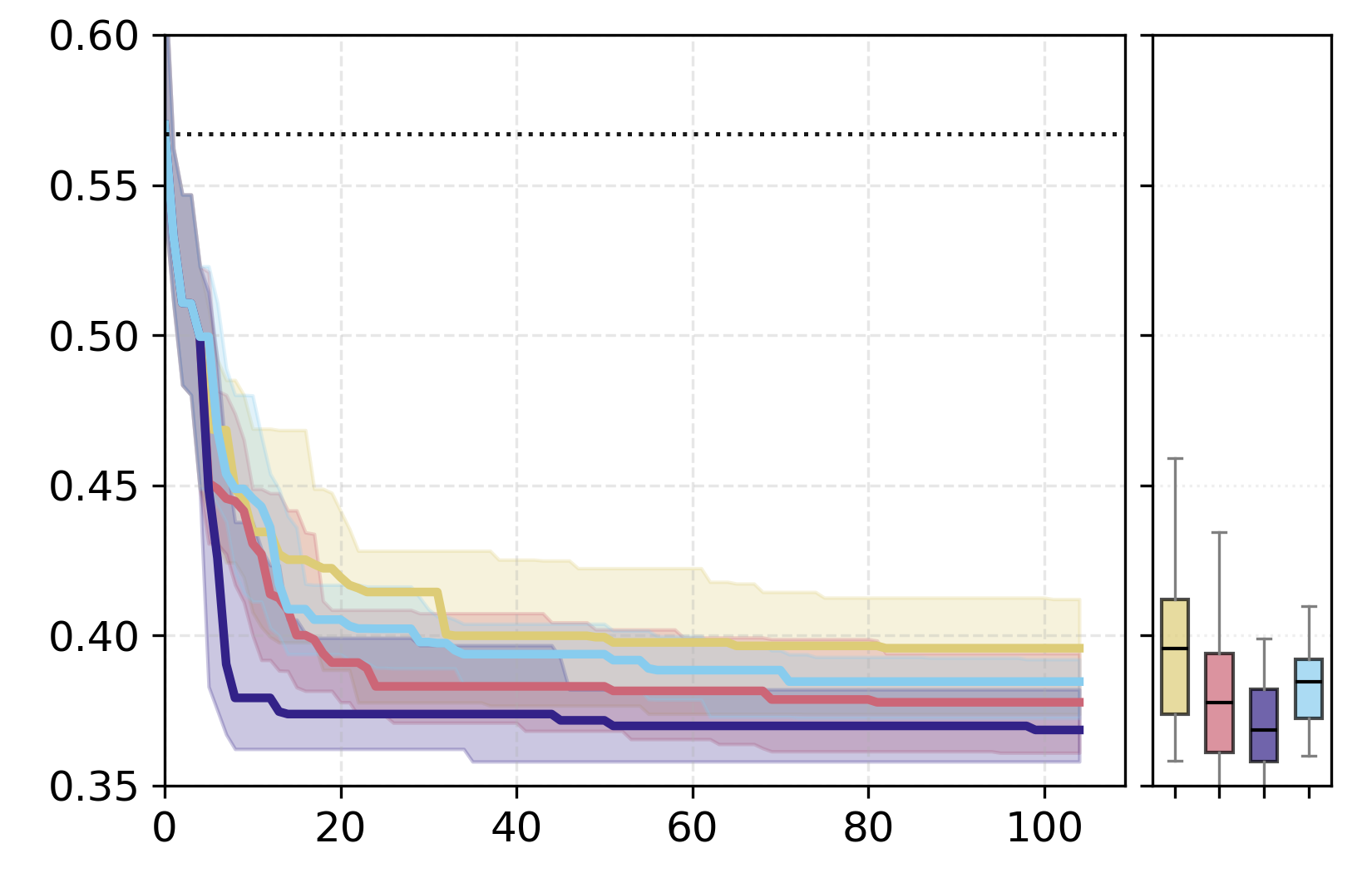} \\
            \vspace{-4pt}
            \footnotesize  $N$
        \end{minipage}
    \end{tabular}
    \caption{Regret (median and quartiles) and distribution over the final recommendation for $\alpha_0$-GaBO (\aogaboline), \mbox{$\alpha_{\text{-}1}$-GaBO} (\amogaboline), $\sphere{d}$-Eucl. BO (\spheuboline), and BORIS (\borisline) on mixture of classifiers for wall-following robot navigation dataset on $\simplex{7}$. All models outperform the best standalone classifier (\blackdottedline).}
    \label{fig:simple_mixture}
\end{figure}

\subsection{Mixture of simple classifiers}

We consider a mixture of classifiers for a non-linearly separable classification task on the UCI \textit{wall-following robot navigation data}. The dataset comprises readings from a circular array of $24$ ultrasound sensors on a SCITOS G5 robot, captured during $4$ full cycles of clockwise wall-following navigation. The objective is to select robot navigation command among $4$ classes (move forward, slight right turn, sharp right turn, slight left turn) from sensory data. We consider $8$ simple classifiers: multinomial logistic, linear SVM, Gaussian na\"{\i}ve Bayes, K-NN, decision trees of depth $1$ (decision stomp) and depth $2$, quadratic discriminant analysis and strongly regularized multinomial logistic regression. The BO objective is to find the optimal mixture of classifiers on $\simplex{7}$ for the robot to optimally navigate.

Fig.~\ref{fig:simple_mixture} and Table~\ref{tab:all_real_world} show performance results. All approaches exhibit similar performances with \mbox{$\alpha_{\text{-}1}$-GaBO} and Euclidean BO on $\sphere{7}$ leading to slightly lower function values, where the latter converges slightly lower and faster.

\begin{table*}[tbp]
\centering
\footnotesize
\caption{Median and interquartile range of the final recommendation, and $p$-values of Mann-Whitney tests on real-world applications. Statistical tests are against the best BO alternative per dimension (${}^*$: $p<\!0.05$, ${}^{**}$: $p<\!0.01$, ${}^{***}$: $p<\!0.001$).  Note that the objective function is maximized for concrete and minimized for the other applications.}
\label{tab:all_real_world}
\resizebox{\linewidth}{!}{
\begin{threeparttable}
\setlength{\tabcolsep}{2.6pt} 
\begin{tabular}{l*{15}{c}}
\toprule
Method
& \multicolumn{3}{c}{Concrete ($d\!=\!7$)}
& \multicolumn{3}{c}{WF3 ($d\!=\!3$)}
& \multicolumn{3}{c}{PCE10 ($d\!=\!3$)}
& \multicolumn{3}{c}{Simple classifiers ($d\!=\!7$)}
& \multicolumn{3}{c}{Robotics ($d\!=\!6$)}
\\
\cmidrule(lr){2-4}
\cmidrule(lr){5-7}
\cmidrule(lr){8-10}
\cmidrule(lr){11-13}
\cmidrule(lr){14-16}
&
Median & IQR & $p$ 
&
Median & IQR & $p$ 
&
Median & IQR & $p$ 
&
Median & IQR & $p$ 
&
Median & IQR & $p$ 
\\
\midrule

$\mathbb{S}^d$-Eucl. BO
& {$\bm{104.098}$} & \underline{$12.172$} & --
& $0.040$ & \underline{$0.017$} & $4.7e^{-1}$
& {$\bm{0.000}$} & $0.024$ & $9.8e^{-1}$
& {$\bm{0.369}$} & \underline{$0.024$} & --
& $2.832$ & $0.357$ & $3.3e^{-3}$${}^{**}$
\\

BORIS
& $68.372$ & $13.874$ & $1.4e^{-9}$${}^{***}$
& {$\bm{0.029}$} & $0.018$ & --
& {$\bm{0.000}$} & $0.026$ & $4.9e^{-1}$
& $0.380$ & {$\bm{0.020}$} & $1.7e^{-2}$${}^{*}$
& $2.778$ & $0.387$ & $2.8e^{-3}$${}^{**}$
\\

$\alpha_0$-GaBO
& \underline{$88.025$} & {$\bm{4.129}$} & $5.0e^{-7}$${}^{***}$
& \underline{$0.031$} & $0.017$ & $1.4e^{-1}$
& \underline{$0.007$} & \underline{$0.005$} & $8.4e^{-4}$${}^{***}$
& $0.396$ & $0.041$ & $1.5e^{-4}$${}^{***}$
& {$\bm{2.634}$} & \underline{$0.225$} & --
\\

$\alpha_{-1}$-GaBO
& $65.760$ & $24.594$ & $1.4e^{-9}$${}^{***}$
& $0.032$ & {$\bm{0.015}$} & $1.6e^{-1}$
& {$\bm{0.000}$} & {$\bm{0.004}$} & --
& \underline{$0.378$} & $0.032$ & $9.9e^{-2}$
& \underline{$2.668$} & {$\bm{0.224}$} & $2.9e^{-1}$
\\

\bottomrule
\end{tabular}
\end{threeparttable}
}
\end{table*}

\subsection{Robotic multi-task control}
\label{subsec:exp-multitask-control}
Finally, we consider a robotic multi-task control problem, where we aim to optimize time-varying priorities assigned to different tasks to maximize the robot performance for a desired complex behavior. 
Given $N$ elementary tasks with torques $\bm{\tau}_i$, the final controller is $\bm{\tau}(\bm{q}, \dot{\bm{q}}, t) = \sum_{i=1}^N \alpha_i(t)\bm{\tau}_i(\bm{q}, \dot{\bm{q}})$, where $\bm{q},\dot{\bm{q}} \in \mathbb{R}^d$ denote the robot joint positions  and velocities, $t \in [0,T]$ is the time, and $\alpha_i(t) \in \mathbb{R}^+$ is the time-dependent $i$-th task weight defining its priority. \citet{Modugno2016,su2018sample} model the task weights as the weighted sum 
$
    \alpha_i(\bm{\pi}_i, t) = s\left( \frac{\sum_{k=1}^K \pi_{i,k} \psi(\mu_k, \sigma_k, t)}{\sum_{k=1}^K \psi(\mu_k, \sigma_k, t)}\right)\mbox{,}
$
where $\psi(\mu_k,\sigma_k,t) = \exp(-\frac{(t-\mu_k)^2}{2\sigma_k^2})$ are radial basis functions with fixed mean $\mu_k$ and variance $\sigma_k$, $s$ is the sigmoid function, and $\bm{\pi}_i \in \mathbb{R}^K_+$ is a set of parameters determining the time-varying task weights. This formulation shifts the problem to finding an optimal matrix of parameters $\Pi = [\bm{\pi}_1^\trsp, \ldots, \bm{\pi}_N^\trsp] \in \mathbb{R}^{N \times K}$ instead of functions $\alpha_i(t)$. 
We propose to model the task weights as elements of the simplex, as ensuring that the weights sum to $1$ guarantee that at least one task is activated at any time~\citep{Jaquier22:SeqBlendSkills}.
Specifically, we model the task weights as 
\begin{equation*}
    \alpha_i(\bm{\pi}_i, t) = \frac{\sum_{k=1}^K \pi_{i,k} \psi(\mu_k,\sigma_k,t)}{\sum_{k=1}^K \psi(\mu_k,\sigma_k,t)} \text{ with } \Pi_{k} \in \Delta^{N-1},
\end{equation*}
where $\Pi_{k}$ denote the $k$-th column of the weight matrix $\Pi \in \mathbb{R}^{N \times K}$, thereby ensuring that $(\alpha_1, \dots, \alpha_N)^\trsp \in \Delta^{N-1}$, see App.~\ref{appendix:softtaskpriorities} for a proof. The parameter space thus becomes the product of manifolds $\prod_{k=1}^K \Delta^{N-1}$. 

We design an experiment where a RB-Y1 humanoid robot placed in front of a cylindrical obstacle similar to a pillar aims at reaching target positions with its two hands while avoiding collisions and sudden potentially-damaging movements. 
We compose $4$ elementary tasks: two reaching tasks for the left and right hands, a posture task to keep the torso straight, and a repulsive force field to avoid collisions. We set $T=30$s and $K=10$ equally-spaced basis functions. The objective is to minimize a loss function penalizing the distance to the target positions, high joint torques, and collisions, see App.~\ref{appendix:softtaskpriorities} for the complete experiment description.

Fig.~\ref{fig:robot} and Table~\ref{tab:all_real_world} show the performance results and snapshots of the optimal robot trajectory. We observe that both $\alpha$-GaBO models outperform their Euclidean counterparts by converging faster to lower function values with significantly lower variance with $\alpha_0$-GaBO performing best overall. The robot follows a collision-free trajectory reaching the desired targets.

\begin{figure}[t]
    \centering
    \setlength{\tabcolsep}{0pt} 
    \renewcommand{\arraystretch}{1.0} 
    \begin{tabular}{m{0.25cm} c c}
        \rotatebox[origin=c]{90}{ \footnotesize $F$}  &
        \begin{minipage}[c]{0.61\linewidth}
            \centering
            \includegraphics[width=\linewidth]{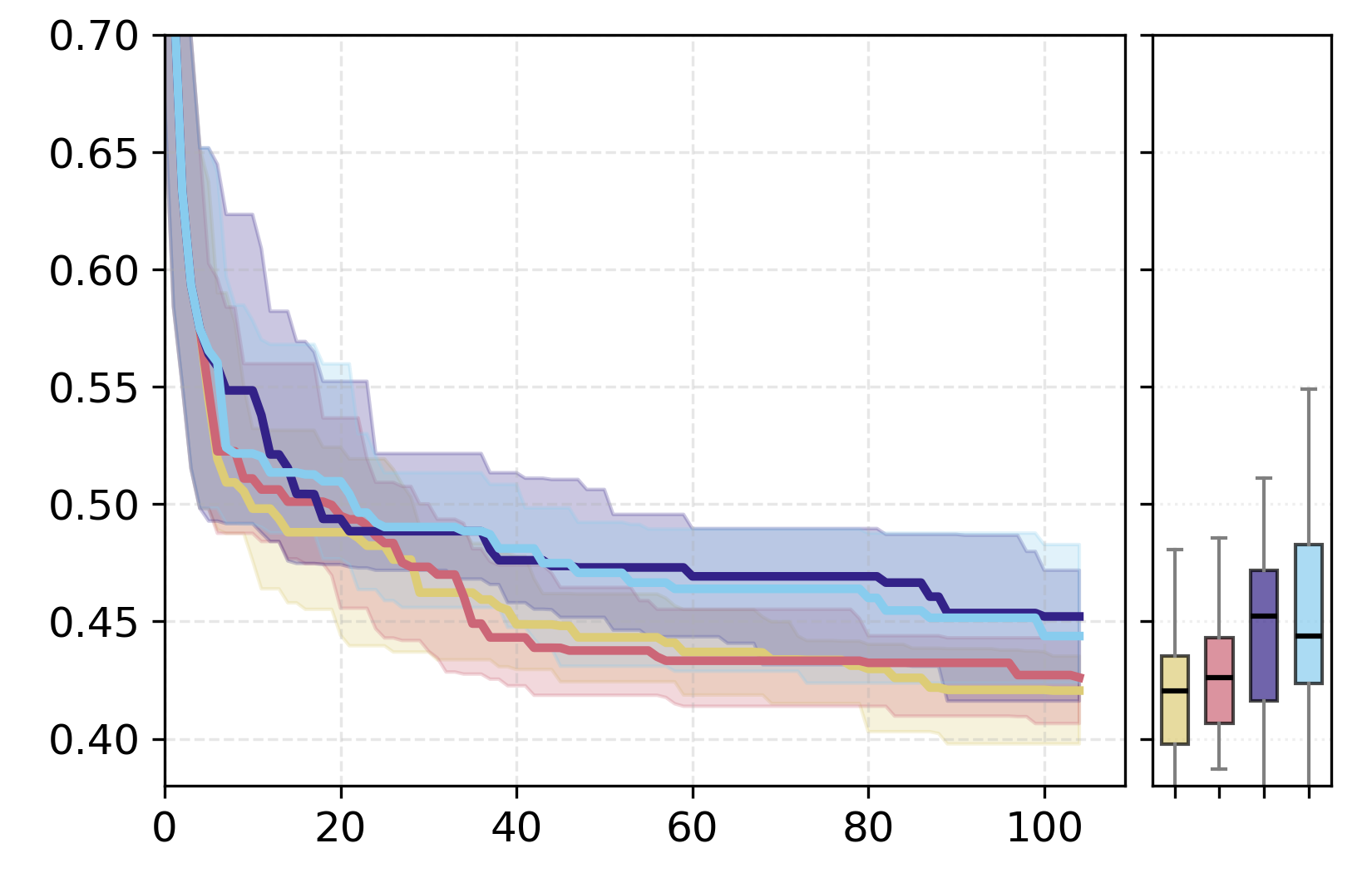} \\
            \vspace{-4pt}
            \footnotesize $N$
        \end{minipage} &
        \begin{minipage}[c]{0.36\linewidth}
            \centering
            \includegraphics[width=\linewidth]{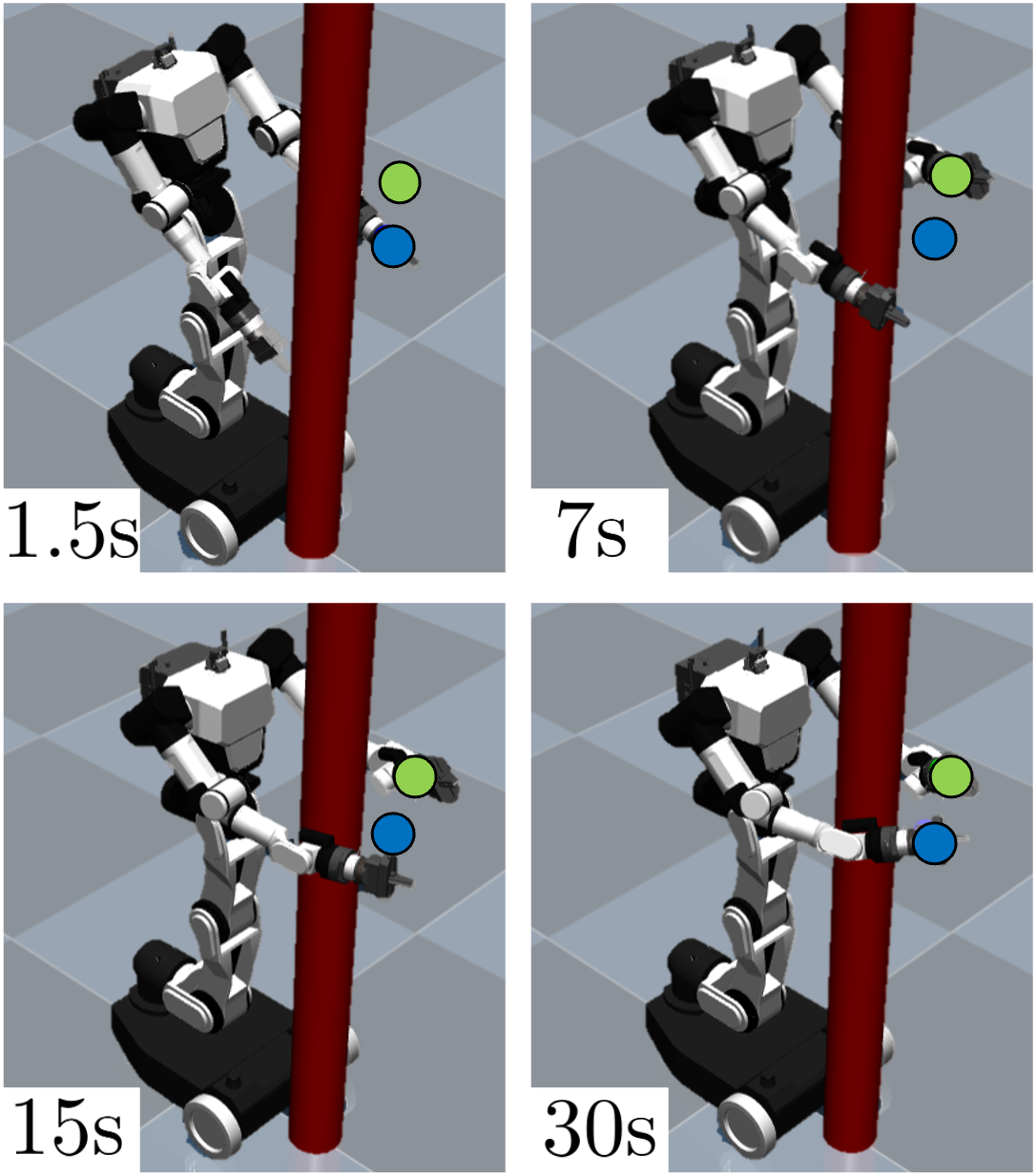}
        \end{minipage} \\[0.0em] 
    \end{tabular}
    \caption{\emph{Left:} Regret (median and quartiles) and distribution over the final recommendation for $\alpha_0$-GaBO (\aogaboline), \mbox{$\alpha_{\text{-}1}$-GaBO} (\amogaboline), $\sphere{d}$-Eucl. BO (\spheuboline), and BORIS (\borisline) for robotic multi-task control. \emph{Right:} Snapshots of the optimal trajectory with target left (\greencircle) and right (\bluecircle) hand positions.}
    \label{fig:robot}
\end{figure}


\section{Conclusions}

We proposed $\alpha$-GaBO, a geometry-aware Bayesian optimization framework designed for objective functions defined on the probability simplex. This is achieved by leveraging information geometry tools to define Matérn kernels on the probability simplex and to design a one-parameter family of acquisition function optimizers accounting for the geometry of the manifold. 
We evaluated $\alpha$-GaBO on synthetic benchmarks and to optimize mixtures of components, mixture of classifiers, and robot task priorities, showcasing superior or matching performances compared to constrained Euclidean methods overlooking geometric priors. 

Besides probabilities and mixtures, the probability simplex has recently been used as a relaxation for categorical data for training flow matching~\citep{davis2024fisher}. In a similar line, we hypothesize that $\alpha$-GaBO may serve as a starting point for designing geometry-aware BO frameworks for categorical domains~\citep{gonzalez2024:surveydiscreteBO}.  
As future work, we also plan to investigate extensions of $\alpha$-GaBO to higher dimensional search spaces via Riemannian latent spaces, akin to~\citep{jaquier20b}.
Another promising direction concerns the design of families of GaBO algorithms based on the $\alpha$ connection for a wider class of information manifolds such as symmetric positive-definite matrices~\citep{amari2016information, ay2017information}.

\clearpage
\begin{acknowledgements} 
    \noindent We greatly acknowledge the \href{https://datalab.unimib.it/}{``DEMS Data Science Lab''} for supporting this work by providing computational resources. This work was partially supported by the Wallenberg AI, Autonomous Systems and Software Program (WASP) funded by the Knut and Alice Wallenberg Foundation. The authors also thank Seungyeon Kim for his help with the implementation of the robotic multi-task control experiment.
\end{acknowledgements}
\bibliography{ref}

\newpage

\onecolumn

\title{Supplementary Material}
\maketitle

\appendix

\section{Probability Simplex}
\label{appendix:IG}

\subsection{Information Geometry of the Probability Simplex}
\label{appendix:IGSimplex}
This Appendix provides a formal extended background on the probability simplex. As the probability simplex is the space of finite probability measures, we start the discussion in an abstract manner involving measures. Moreover, this level of abstractness simplifies the introduction of the Fisher-Rao metric and allows us to refer to an existing result for the extension of the metric to the border of the simplex. Afterwards, we introduce a simple embedding which recovers the standard definition of the simplex as a Euclidean subset. 

We consider a discrete, finite sample space $\Omega = \{\omega_0, \omega_1, \dots , \omega_{d}\}$ with $d+1$ elements, which with no loss of generality we can identify with $I = \{0, 1, \dots,d\} \subseteq \mathbb{N}_0$. We call $\mathcal{F}(\Omega)$ the algebra of real functions on $\Omega$. Its dual $\mathcal{F}^*(\Omega)$ is naturally identified with the space of signed measures on the sample space. Denoting the canonical and the dual basis of these spaces with $\{e_i\}$ and $\{\delta^i\}$, we can express $\mu \in \mathcal{F}^*(\Omega)$ as $\mu = \mu_i \delta^i$ with $\mu_i:=\mu(e_i)$ where we used Einstein summation convention. We define the space of finite probability measures as

\begin{equation}
    \mathcal{P}_+(\Omega) := \{\mu \in \mathcal{F}^*(\Omega)| \mu_i \geq 0, \mu_i \delta^i = 1\},
\end{equation}

and denote its relative interior by $\mathring{\mathcal{P}}_+(\Omega)$, which we can see as an open submanifold of $\mathcal{F}^*(\Omega)$.
Because $\mathcal{F}^*(\Omega)$ is a vector space, it is clear that its tangent bundle is the trivial bundle and the cotangent is the trivialization with $\mathcal{F}(\Omega)$ because of the natural identification $\mathcal{F}^{**}(\Omega)=\mathcal{F}(\Omega)$. Following~\citep{ay2017information, Pistone2019Information}, it is simple to show that 

\begin{align}
    \mathcal{T}_\mu\mathring{\mathcal{P}}_+(\Omega) \cong \mathring{\mathcal{P}}_+(\Omega) \times \mathcal{F}^*_0(\Omega), \\
    \mathcal{T}^*_\mu\mathring{\mathcal{P}}_+(\Omega) \cong \mathring{\mathcal{P}}_+(\Omega) \times \mathcal{F}(\Omega) /\mathbb{R}.
\end{align}

Next, we consider the natural $L^2$ product on $\mathcal{F}(\Omega)$, given by
\begin{align}
    \langle f, g \rangle_\mu = \mu(fg),
\end{align}
with the induced vector space isomorphism to the dual $\mathcal{F}^*(\Omega)$ denoted as $\langle f, . \rangle_\mu$. Using the inverse isomorphism, we can characterize an element of the tangent space at $\mu$ with $f \in \mathcal{F}(\Omega)$ with the Radon-Nikodym derivative as $f = \frac{\text{d}\nu}{\text{d}\mu} = \sum_{i \in I}\frac{\nu_i}{\mu_i}e_i$. Thus, the $L^2$ product can be also expressed as 

\begin{equation}
    \langle f, g \rangle_\mu = \sum_{i \in I} \frac{\nu_i\zeta_i}{\mu_i},
\end{equation}
 
for some $f=\sum\frac{\nu_i}{\mu_i}e_i$,  $g=\sum\frac{\zeta_i}{\mu_i}e_i$, and $\nu, \zeta \in T_\mu\mathring{\mathcal{P}}_+(\Omega)$. Due to the simplicity of this isometry, it is very common to find two equivalent characterizations of the tangent space at $\mu$ in the literature, namely the \textit{measure} characterization where $T_\mu\mathring{\mathcal{P}}_+(\Omega) \cong \mathcal{F}_0^*(\Omega)$, and the \textit{score} characterization where 
$T_\mu\mathring{\mathcal{P}}_+(\Omega) \cong L^2_0(\Omega, \mu)$. In an information theoretic fashion, the inner product defined on such spaces is called the Fisher metric. This characterization of the metric is rather unusual in standard information geometry literature, where it is instead derived from the Fisher information matrix. Our choice is supported by the fact, in the case of discrete probability measures, it is easy to see that both formulations are equivalent~\citep{ay2017information}, whereas Fisher information requires the introduction of some additional concepts we do not need in our approach. We conclude this paragraph on first-order geometry by providing the formula of the gradient of a function $F:\mathring{\mathcal{P}}_+(\Omega) \to \euclideanspace$, also called the natural gradient, 
\begin{equation}
    \operatorname{grad}_\mu F=(DF(\mu) - \mathbb{E}_\mu[DF(\mu)])
\end{equation}
where $D$ denotes the standard Euclidean gradient.

Optimization methods require tools arising from the second order geometry of the space. In information geometry, it is common to work with the conjugate connection structure, which naturally arises from the geometric description of statistical models and allows the construction of a family of connections. The second order geometry is induced by conjugate connections $(\nabla, \nabla^*)$, also called the mixture and exponential connection, respectively. The relationship between these connections is expressed by
\begin{equation}
    Xg(Y,Z) = g(\nabla_X Y,Z) + g(Y, \nabla^*_X Z) \mbox{,}
\end{equation}
where $X,Y,Z$ are vector fields on the information manifold and $g$ is the Fisher-Rao metric. 
We refer the reader to~\citep{amari2016information, ay2017information} for a detailed treatment of this structure which goes beyond the scope of this paper. Next, we report the useful operations used in our method. We use the tangent space characterization $\mathcal{T}_\mu\mathring{\mathcal{P}}_+(\Omega) \cong L^2_0(\Omega)$ and denote tangent vectors as $X, Y \in \mathcal{T}_\mu\mathring{\mathcal{P}}_+(\Omega)$. The mixture and exponential connections then given as

\begin{align}
    \nabla_XY(\mu) &= (\mu, D^2_XY(\mu) + X(\mu)Y(\mu)) \mbox{,} \\
    \nabla^*_XY(\mu) &= \left(\mu, D^2_XY(\mu) +  \mathbb{E}_\mu\left[X(\mu)Y(\mu)\right]\right) \mbox{,}
\end{align}

where $D^2$ is the Euclidean differentiation of a vector field. The associated exponential maps have the form

\begin{align}
    \expmap{\mu}{f} &=\mu + f\mu, \label{eq:app-mixture-connection} \\
    \expmapblank{\mu}^*(f) &= \frac{\operatorname{exp}(f)}{\mathbb{E}_\mu[\operatorname{exp}(f)]}\mu, \label{eq:app-exp-connection} 
\end{align}
respectively, where the exponential is taken component-wise and $(f\mu)_i=f_i\mu_i$. 
The $\alpha$-connection is the one-parameter family of connections defined via the convex combinations of~\eqref{eq:app-mixture-connection}-\eqref{eq:app-exp-connection} as 
\begin{align}
    \nabla^{(\alpha)}_XY(\mu) = (\mu, D^2_XY(\mu) + \frac{1+\alpha}{2}X(\mu)Y(\mu) + \frac{1-\alpha}{2}\mathbb{E}_\mu\left[X(\mu)Y(\mu)\right]) \mbox{,}
\end{align}
with $\alpha \in [-1, 1]$. 
Note that we adopted the convention for which we recover the mixture and exponential connection for $\alpha=1$ and $\alpha=-1$, respectively. The Levi-Civita connection (corresponding to the Fisher-Rao metric) is obtained for $\alpha=0$, which can be rewritten as

\begin{align}
    \nabla^{(\text{LC})} = \frac{\nabla + \nabla^*}{2} \mbox{.}
\end{align}

An essential ingredient in optimization is the Hessian which, for a generic $\alpha \in [-1, 1]$, at certain point $\mu \in \mathring{\mathcal{P}}_+(\Omega)$ takes the form $\mathcal{H}^\alpha_\mu F(X)=\nabla^{(\alpha)}_X\operatorname{grad}_\mu F$ for $X \in \mathcal{T}_\mu\mathring{\mathcal{P}}_+(\Omega)$.

In general, working in function spaces is not easy and requires advanced tools. In this particular setting, it is possible to avoid this by defining the relative interior of the simplex via a parametrization of $\mathring{\mathcal{P}}_+(\Omega)$. Specifically, we define the probability simplex as $\Delta(\Omega) := \{ \bm{x} \in \mathbb{R}^{d+1} | x_i \geq 0 \text{ and } \sum_i x_i = 1\}$. By abuse of notation, we denote this space simply as $\Delta^d$ to emphasize the dimension of the sample space. The relative interior $\mathring{\Delta}^d$ of $\simplex{d}$ can easily be embedded in $\mathring{\mathcal{P}}_+(\Omega)$ via 

\begin{equation}
\label{eq:simplexprobembedding}
    \mathring{\Delta}^d \hookrightarrow \mathring{\mathcal{P}}_+(\Omega) : \bm{x} \mapsto \rho(x_i) \delta^i \mbox{,}
\end{equation}

where $\rho(x_i)$ maps $x_i$ to a constant function. The differentiable structure on $\mathring{\Delta}^d$ is simply obtained by pulling back the structure from  $\mathring{\mathcal{P}}_+(\Omega)$ using this embedding. For completeness with respect to Sec.~\ref{sec:Background}, we also report some useful quantities for Riemannian optimization, namely

\begin{align}
    \operatorname{grad}_{\bm{x} }
F&=DF(\bm{x}) - \bm{x}^T(DF(\bm{x}))\mbox{,} \label{eq:gradient_simplex}\\
\nabla^{(\alpha)}_{\bm{\eta}}\bm{\xi}(\bm{x}) &= \big(\bm{x}, D^2_{\bm{\eta}}\bm{\xi} + \frac{1+\alpha}{2}\bm{\eta} \odot \bm{\xi} + \frac{1-\alpha}{2}\bm{x}^T({\bm{\eta}} \odot \bm{\xi}) \big) \mbox{,} \\
\mathcal{H}^\alpha_{\bm{x}} F(\bm{\eta})&=\nabla^{(\alpha)}_{\bm{\eta}}\operatorname{grad}_{\bm{x}} F
\mbox{,} \\
\operatorname{Exp}^{(1)}_{\bm{x}}(\bm{\eta}) &=\bm{x} + \bm{\eta} \odot \bm{x}\mbox{,} \\
\operatorname{Exp}^{(-1)}_{\bm{x}}(\bm{\eta}) &= \frac{\operatorname{exp}(\bm{\eta})}{\bm{x}^T\operatorname{exp}(\bm{\eta})} \odot \bm{x}\mbox{,} \\
\operatorname{Exp}^{(0)}_{\bm{x}}(\bm{\eta}) &= \big(\sqrt{\bm{x}} \cos (\frac{\| \bm{\eta}\|_F}{2}) + \frac{\sqrt{\bm{x}} \odot \bm{\eta}}{\|\bm{\eta}\|_F} \sin( \frac{\| \bm{\eta}\|_F}{2})\big)^2\mbox{,} 
\end{align}

where $\odot$ is the Hadamard product, $\bm{x} \in \mathring{\Delta}^d$, $F:\mathring{\Delta}^d \to \euclideanspace$ is a differentiable function, $D$ is the Euclidean gradient, $D^2$ the Euclidean vector field differentiation, $\bm{\eta}, \bm{\xi} \in \mathcal{T}_{\bm{x}}\mathring{\Delta}^d$, $\alpha \in [-1, 1]$, $\|\bm{\eta}\|_F\!=\!\sqrt{\bm{x}^T\bm{\eta}^2}$ and the exponentiation and square root are taken component-wise. We note that~\eqref{eq:gradient_simplex} has this form because we characterize tangent vectors as scores, thus differing from, e.g.,~\citep{ay2017information} where a measure characterization is preferred. 

\subsection{Sphere map}
\label{appendix:IGSpheremap}

Until now, we restricted our attention to the interior of the simplex because the inner product is ill defined when approaching the border of the simplex. Note that this is clear with the measure zero characterization of the tangent bundle, but it is easy to see that it is also true for mean zero random variables as the Radon-Nikodym derivative is ill defined. Using a simple observation, we now justify why the manifold structure is well defined also on the boundary. 
We consider the spherical submanifold of $\mathcal{F}(\Omega)$ denoted as $S_{2, +} = \{ f \in \mathcal{F}(\Omega)| f(\omega) >0 \text{ and } \sum_{\omega \in \Omega} f^2(\omega) = 4\}$ and equipped with the inherited structure by restriction in the ambient space. We can now consider the smooth diffeomorphism 

\begin{equation}
    \pi:\mathring{\mathcal{P}}_+(\Omega) \to S_{2, +} \sum_{i \in I} \mu_i \delta^i \mapsto 2 \sum_{i \in I} \sqrt{\mu_i} e_i \mbox{,}
\end{equation}
which is also an isometry. Moreover, as on $S_{2, +}$ the metric can be smoothly extended to the boundary $\partial S_{2, +}$, it can be proven that via the inverse of the mapping, the metric can be extended isometrically also on $\mathcal{P}_+(\Omega)$~\citep[Proposition 2.1]{ay2017information}. By means of the embdedding argument~\eqref{eq:simplexprobembedding}, we can define the following commutative diagram
\[
\begin{tikzcd}
 \mathcal{P}_+(\Omega) \arrow[r, "\pi"] \arrow[d, "\rho^{-1}"'] & \overline{S}_{2,+} \arrow[d, "\rho^{-1}"] \\
\Delta(\Omega) \arrow[r, "\varphi"'] & \sphere{d}_{\geq 0}(2)
\end{tikzcd}
\] 

where we define the sphere map $\varphi := \rho \circ \pi \circ \rho^{-1}$, which can be simply expressed as

\begin{equation}
    \varphi: \Delta(\Omega) \to \sphere{d}_{\geq0}(2): \bm{x} \mapsto 2 \sqrt{\bm{x}} \mbox{,}
\end{equation}

with the square root taken element-wise and $\sphere{d}_{\geq0}(2)$ the closure of the positive orthant of the radius-$2$ sphere w.r.t. the usual metric. Notice that, by a simple rescaling of the metric, it is possible to avoid the multiplication by $2$ and map onto the radius-$1$ sphere, whose extended positive orthant is denoted as $\sphere{d}_{\geq0}$. 

As discuss in Sec.~\ref{sec:alpha-GaBO}, this isometry is particularly useful for optimization on the simplex as we approach the boundary. This because (1) the inner product is numerically well defined on the sphere in contrast to the simplex where it remains intractable and (2) the Levi-Civita connection on the sphere is equivalent to the one on the simplex, while the second order geometry has a simple and well known characterization on the sphere.

\section{Ablation Study: Riemannian vs. Euclidean Kernels and Optimization}
\label{appendix:ablation}

This appendix reports the complete results of the ablation study discussed in Sec.~\ref{subsec:benchmarks}. To disentangle the contributions of the Riemannian kernel and Riemannian optimization of the acquisition function, we evaluate the combinations of Riemannian and Euclidean kernels with Riemannian and Euclidean optimization for optimizing the Griewank function on the $\simplex{5}$. Table~\ref{table:ablation} shows that the use of a Riemannian kernel accounts for the largest part of the performance improvement, while Riemannian optimization provides an additional, albeit smaller, benefit when combined with the Riemannian kernel.

\begin{table}[H]
\centering
\caption{Ablation study on the 5-dimensional simplex with the Griewank objective, comparing Riemannian and Euclidean kernels and optimization methods.}
\label{table:ablation}
\begin{tabular}{@{}lllcc@{}}
\toprule
Manifold & Kernel      & Acq.~Opt.     & Median Regret & IQR  \\
\midrule
Sphere   & Riemannian  & Riemannian    & $-3.728$      & $0.670$\\
Sphere   & Euclidean   & Riemannian    & $-1.473$      & $2.072$\\
Sphere   & Riemannian  & Euclidean     & $-2.880$      & $0.959$\\
Sphere   & Euclidean   & Euclidean     & $-1.480$      &$1.987$\\
\addlinespace
Simplex  & Riemannian  & Riemannian    & $-3.634$      & $0.496$\\
Simplex  & Euclidean   & Riemannian    & $-1.500$      & $1.718$\\
Simplex  & Riemannian  & Euclidean     & $-3.273$      & $0.531$\\
Simplex  & Euclidean   & Euclidean     & $-1.473$      & $0.401$\\
\bottomrule
\end{tabular}
\end{table}

\section{Runtimes}
\label{app:runtimes}

Table~\ref{table:iterationtime} reports the wall-clock time per iteration for the Ackley function. While $\alpha$-GaBO is significantly slower than its Euclidean counterpart in low dimensions, the gap decreases as dimension increases since the number of constraints for the Euclidean optimization of the acquisition function increases.

\begin{table}[h]
\centering
\setlength{\tabcolsep}{4pt} 
\caption{Median (interquartile range) iteration wall-clock times in seconds for the Ackley function. All experiments are run inside a virtual machine with 16~vCPUs AMD EPYC~7V12 with an NVIDIA Tesla~T4.}
\label{table:iterationtime}
\begin{tabular}{lccc}
\toprule
Method  & $d=2$        & $d=5$        & $d=10$       \\
\midrule
$\mathbb{S}^d$-Eucl. BO  & $0.046$ ($0.028$) & $0.068$ ($0.025$) & $0.112$ ($0.008$) \\
BORIS & $0.046$ ($0.024$) & $0.068$ ($0.025$) & $0.112$ ($0.010$) \\
$\alpha_0$-GaBO  & $0.235$ ($0.049$) & $0.241$ ($0.042$) & $0.270$ ($0.033$) \\
$\alpha_{-1}$-GaBO & $0.302$ ($0.099$) & $0.231$ ($0.079$) & $0.245$ ($0.071$) \\
\bottomrule
\end{tabular}
\end{table}

\section{Additional details on the concrete compressive strength MLP oracle}
\label{appendix:MLPoracle}
We build an MLP with $3$ hidden layers, each followed by a dropout layer, and ReLU activations. It is trained with $60\%$ train, $20\% $ validation and $20\%$ test split. We use a batch size of $32$
 and $500$
 epochs with early stopping when validation after $30$
 iterations. Optimization is conducted with an Adam optimizer with learning rate $0.001$
 which is halved on plateau for a maximum of $15$ times. We store weights for the best model on validation loss and check quality of the predictions on the test set.\looseness-1

\section{Additional details on the robotic multi-task control experiment}
\label{appendix:softtaskpriorities}

\subsection{Multi-task control on the probability simplex}

Multi-task control allows the generation of complex robot behavior by combining several tasks. Such controllers prioritized simple tasks either through a strict hierarchy~\citep{khatib2022constraint-consistent}, or by weighting them according to a soft hierarchy~\citep{salini11:softpriority}. The latter corresponds to defining a sequence of tasks with concurrent activations, where smooth transitions are achieved by smoothly varying the weights. While early works manually tuned the task weights to obtained the desired robot behaviors~\citep{salini11:softpriority}, more recent approaches optimize the priorities using covariance matrix adaptation evolution strategy (CMA-ES)~\citep{dehio15:softpriorities,modugno2016:softtaskpriorities} or BO~\citep{su2018sample}.

We denote the robot joint positions, velocities, and accelerations as $\bm{q} \in \mathbb{R}^M$, $\dot{\bm{q}}$, and $\ddot{\bm{q}}$. The standard gravity-compensated robot rigid-body dynamics are~\citep{spong2020robot}
\begin{equation}
    \bm{M}(\bm{q})\bm{\ddot{q}} + \bm{c}(\bm{q}, \dot{\bm{q}}) = \bm{\tau}(\bm{q}, \dot{\bm{q}})\mbox{,}
\end{equation}
where $\bm{M}(\bm{q})$ is the generalized inertia matrix, $\bm{c}(\bm{q}, \dot{\bm{q}})$ is the vector of Coriolis forces, and $\bm{\tau}$ is the external torques, in our case, generated by the multitask controller. 
Given $N$ elementary tasks $\bm{\tau}_i$, a multi-task controller is defined as 
\begin{equation}
\label{eq:multitask-controller-sum}
    \bm{\tau}(\bm{q}, \dot{\bm{q}}, t) = \sum_{i=1}^N \alpha_i(t)\bm{\tau}_i(\bm{q}, \dot{\bm{q}})\mbox{,}
\end{equation}
$t \in \mathbb{R}^+$ is the time, and $\alpha_i(t) \in \mathbb{R}^+$ is the time-dependent $i$-th task weight defining its priority.
Given a set of $N$ simple tasks, we aim to find a set of functions $\alpha_i$ that maximizes an objective function $F$ quantifying the performance of the robot in achieving a complex desired behavior.
\citet{Modugno2016,su2018sample} model the task weights as a weighted sum of normalized radial basis functions, i.e.,
\begin{equation}
\label{eq:multitask-weights-sigmoid}
    \alpha_i(\bm{\pi}_i, t) = s\left( \frac{\sum_{k=1}^K \pi_{i,k} \psi(\mu_k, \sigma_k, t)}{\sum_{k=1}^K \psi(\mu_k, \sigma_k, t)}\right),
\end{equation}
where $\psi(\mu_k,\sigma_k,t) = \exp(-\frac{(t-\mu_k)^2}{2\sigma_k^2})$ are radial basis functions with fixed mean $\mu_k$ and variance $\sigma_k$, $s$ is the sigmoid function, and $\bm{\pi}_i \in \mathbb{R}^K$ is a set of parameters determining the time-varying task weights. This formulation shifts the problem to finding an optimal matrix of parameters $\Pi = [\bm{\pi}_1^\trsp, \ldots, \bm{\pi}_N^\trsp] \in \mathbb{R}^{N \times K}$ instead of functions $\alpha_i(t)$. 
Yet, optimizing $\Pi$ with classical optimization techniques is a complex task as it requires computing or approximating gradients. As an alternative,~\citet{su2018sample} propose to optimize the weights via BO. 

A disadvantage of the formulation~\eqref{eq:multitask-weights-sigmoid} is that it does not guarantee the activation of a task at all times, i.e., the vector $(\alpha_1, \dots, \alpha_N)^\trsp$ may be close or identical to zero, which  in practice results in the robot not performing any task. 
We propose to model the task weights as elements of the probability simplex, i.e., $(\alpha_1, \dots, \alpha_N)^\trsp\in\simplex{N-1}$, as ensuring that the weights sum to $1$ guarantee that at least one task is activated at any time~\citep{Jaquier22:SeqBlendSkills}.
This is achieved by defining each column $\Pi_{k}$ of the matrix $\Pi \in \mathbb{R}^{N \times K}$ as an element of the ${N-1}$-dimensional probability simplex.

\begin{restatable}{proposition}{firstprop}
\label{simplexprop}
    Let a weight matrix $\Pi \in \mathbb{R}^{N \times K}$ with columns $\Pi_{k} \in \Delta^{N-1}$ for each $k$ and define the task weights as

    \begin{equation}
        \alpha_i(\bm{\pi}_i, t) = \frac{\sum_{k=1}^K \pi_{i,k} \psi(\mu_k,\sigma_k,t)}{\sum_{k=1}^K \psi(\mu_k,\sigma_k,t)},
    \end{equation}

    for all $i$. Then $[\alpha_1, \dots, \alpha_N] \in \Delta^{N-1}$. 
\end{restatable}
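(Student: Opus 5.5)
The plan is to fix an arbitrary time $t$ and show that $[\alpha_1(t), \dots, \alpha_N(t)]$ is a convex combination of the columns $\Pi_1, \dots, \Pi_K$. Since $\Delta^{N-1}$ is convex, this places the vector in $\Delta^{N-1}$.

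The first step is to introduce the normalized basis weights
\begin{equation*}
w_k(t) = \frac{\psi(\mu_k,\sigma_k,t)}{\sum_{j=1}^K \psi(\mu_j,\sigma_j,t)}, \qquad k=1,\dots,K.
\end{equation*}
Each $\psi(\mu_k,\sigma_k,t) = \exp(-\frac{(t-\mu_k)^2}{2\sigma_k^2})$ is strictly positive. Hence the denominator is positive, so $w_k(t)$ is well defined, $w_k(t) > 0$, and $\sum_k w_k(t) = 1$. In other words, $(w_1(t), \dots, w_K(t)) \in \Delta^{K-1}$.

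With this notation the definition in Proposition~\ref{simplexprop} reads $\alpha_i(\bm{\pi}_i,t) = \sum_{k=1}^K \pi_{i,k} w_k(t)$. In vector form this is $[\alpha_1, \dots, \alpha_N]^\trsp = \Pi\, \bm{w}(t) = \sum_k w_k(t)\, \Pi_k$.

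The two simplex conditions can then be checked directly.

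\emph{Nonnegativity.} Each $\alpha_i$ is a sum of products of nonnegative numbers $\pi_{i,k} \geq 0$ and positive numbers $w_k(t)$, so $\alpha_i \geq 0$.

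\emph{Normalization.} Exchanging the finite sums gives
\begin{equation*}
\sum_{i=1}^N \alpha_i = \sum_{k=1}^K w_k(t) \sum_{i=1}^N \pi_{i,k} = \sum_{k=1}^K w_k(t) = 1,
\end{equation*}
because each column satisfies $\sum_i \pi_{i,k} = 1$.

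There is no real obstacle here. The only point needing care is that the normalizing denominator never vanishes, which follows from the strict positivity of the Gaussian radial basis functions, provided each $\sigma_k > 0$ so that $\psi$ is defined. No earlier result of the paper is needed beyond the definition of $\Delta^{N-1}$.
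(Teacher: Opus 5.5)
Your proof is correct and takes the same route as the paper: exchange the finite sums over $i$ and $k$, then use that each column $\Pi_k$ sums to one. You are slightly more complete than the paper, which checks only $\sum_i \alpha_i = 1$ and leaves implicit two facts you verify explicitly: that each $\alpha_i$ is nonnegative, and that the denominator $\sum_k \psi(\mu_k,\sigma_k,t)$ is strictly positive.
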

\begin{proof}
    It is sufficient to prove that, for an arbitrary $\Pi$  with columns $\Pi_{k} \in \Delta^{N-1}$, the sum $\sum_i\alpha_i=1$. Formally,
    \begin{align*}
        \sum_{i=1}^N \alpha_i &= \sum_{i=1}^N \frac{\sum_{k=1}^K \pi_{i,k} \psi(\mu_k,\sigma_k,t)}{\sum_{k=1}^K \psi(\mu_k,\sigma_k,t)} \\
        &=\frac{\sum_{i=1}^N\sum_{k=1}^K \pi_{i,k} \psi(\mu_k,\sigma_k,t)}{\sum_{k=1}^K \psi(\mu_k,\sigma_k,t)} \\
&=\frac{\sum_{k=1}^K\sum_{i=1}^N \pi_{i,k} \psi(\mu_k,\sigma_k,t)}{\sum_{k=1}^K \psi(\mu_k,\sigma_k,t)} \\
&=\frac{\sum_{k=1}^K \psi(\mu_k,\sigma_k,t) \sum_{i=1}^N \pi_{i,k}}{\sum_{k=1}^K \psi(\mu_k,\sigma_k,t)}=1
    \mbox{.}
    \end{align*}
\end{proof}

\subsection{Experimental details}

We design a robotic multi-task control task where a RB-Y1 humanoid robot simulated in a Mujoco environment~\citep{todorov2012:mujoco} is placed in front of a cylindrical obstacle similar to a pillar. The objective of the robot is to reach target positions with its left and right hands while avoiding collisions and sudden, potentially-damaging movements. The controller~\eqref{eq:multitask-controller-sum} is composed of $4$ elementary tasks: 
\begin{itemize}
    \item Tasks 1 and 2: Reaching tasks for the left and right hands defined via proportional derivative (PD) controllers $\bm{\tau}_{1,2}(\bm{q}, \dot{\bm{q}}) = k_p \bm{J}^{\dagger}(\bm{q})(\bm{x} - \bm{x}^*) + k_d\dot{\bm{q}}$, where $\bm{x}$ denotes the hand position obtained via the robot forward kinematics and $\bm{J}^\dagger(\bm{q})$ is the pseudo-inverse of the robot Jacobian. We set the desired left and right hand positions as $\bm{x}^*_{\text{left}}=[0.6, 0.03, 1.33]$, $\bm{x}^*_{\text{right}}=[0.6, -0.02, 1.18]$, $k_p=150$ and $k_d=30$.
    \item Task 3: Posture task to keep the torso straight defined via the PD controllers $\bm{\tau}_{3}(\bm{q}, \dot{\bm{q}}) = k_p(\bm{q} - \bm{q}^*) + k_d\dot{\bm{q}}$ with desired joint configurations $\bm{q}^*=\bm{q}(0)$ equal to the default RB-Y1 rest pose, $k_p=150$ and $k_d=30$.
    \item Task 4: Avoiding obstacle via a repulsive force field. The repulsive force from the obstacle is calculated from the surface of the cylinder following~\citep{khatib1986potential} as $\bm{\tau}(\bm{q})=\eta\frac{(\frac{1}{d} - \frac{1}{d_0})}{d^2} \bm{\nu}$ with $d$ the distance of the robot from the obstacle, $\bm{\nu}$ the surface normal vector, and setting $\eta=3\cdot10^{-5}$, $d_0=0.05$.
\end{itemize}

We define the objective function as a negative loss function, i.e., $F(\Pi) = -\phi(\Pi)$, with the loss
\begin{equation}
    \phi(\Pi)=\sum_i^2 a_i\|\bm{x}_i(T) - \bm{x}_i^*\| + a_3\|\bm{q}(T) - \bm{q}^*\| + \lambda I(\Pi)\mbox{,}
\end{equation}
where $a_i$ the weight for the $i$-th positional task, $\lambda$ is a scalar weight, and $I(\tau)$ is an integral penalization term of the form
\begin{align*}
    I(\Pi)=& w_1 \sum_{i=1}^2 \underbrace{b_i \int_{[0, T]} \|\bm{x}_i(t) - \bm{x}_i^*\| dt + b_3 \int_{[0, T]} \|\bm{q}(t) - \bm{q}^*\| dt }_{\text{trajectory length}} \\ &+ w_2\underbrace{\int_{[0, T]} \bm{\tau}(t)^\trsp \bm{\tau}(t) dt}_{\text{joint torque magnitude}} +  w_3\underbrace{\int_{[0, T]} \chi_{\text{collision}}(t) dt}_{\text{time under collision}} + w_4\underbrace{\int_{[0, T]} \chi_{\text{collision}}(t)F(t) \bm{\nu} dt}_{\text{collision forces}}.
\end{align*}
The collision force is captured by multiplying the force vector $F(t)$ at time $t$ by the obstacle surface normal vector $\bm{\nu}$ pointing inside. The weights $w_j$ are set to consistently rescale the different terms as they can reach a significant order of magnitude and $b_i$ are used to weight different tasks. We set the weights of the fitness function as $a_1=a_2=10$ and $a_3=1$, the weight for the overall integral term penalization as $\lambda=0.01$, the rescaling weights as $w_1=w_3=w_4=1$, $w_2=10^{-3}$, the task weights as $b_1= b_2=1$, $b_3=10$. The integral is calculated using Mujoco's timestep, which is set to $dt=0.008$.

We run each simulation for $35$s and set $T=30$ seconds. We split $[0, T]$ in $K=10$ equally-spaced intervals defining the centers $\mu_k$ of $K=10$ basis functions with $\sigma_k=2$. Therefore, the search space is $\mathcal{X} = \prod_{k=1}^{10} \Delta^{3}$.

\end{document}